\documentclass[10pt,journal,compsoc]{IEEEtran}

\usepackage{epsfig}
\usepackage{graphicx}
\usepackage{amsmath}
\usepackage{amsthm}
\usepackage{amssymb}
\usepackage{verbatim}
\usepackage{booktabs}
\usepackage{amscd}
\usepackage{times}
\usepackage{subfigure} 
\usepackage{algorithm}
\usepackage{algorithmic}
\usepackage{hyperref}
\usepackage{color}

\newtheorem{thm}{Theorem}

\newtheorem{lem}[thm]{Lemma}
\newtheorem{prp}[thm]{Proposition}
\newtheorem{dfn}[thm]{Definition}
\newtheorem{prb}[thm]{Problem}

\newtheorem{remark}[thm]{Remark}
\newtheorem{example}[thm]{Example}
\newtheorem{que}[thm]{Question}

\def\A{\mathbb{A}}

\def\B{\boldsymbol{B}}

\def\X{\mathcal{X}}
\def\cS{\mathcal{S}}

\def\H{\mathcal{H}}
\def\I{\mathcal{I}}

\def\J{\mathcal{J}}
\def\Jf{\mathfrak{J}}
\def\Y{\mathcal{Y}}
\def\U{\mathcal{U}}
\def\bU{\boldsymbol{U}}
\def\btU{\tilde{\boldsymbol{U}}}
\def\Z{\mathcal{Z}}
\def\W{\mathcal{W}}

\def\0{\boldsymbol{0}}
\def\a{\boldsymbol{a}}
\def\b{\boldsymbol{b}}
\def\bt{\tilde{\b}}
\def\c{\boldsymbol{c}}
\def\g{\boldsymbol{\gamma}}

\def\s{\boldsymbol{s}}
\def\u{\boldsymbol{u}}

\def\y{\boldsymbol{y}}

\def\x{\boldsymbol{x}}
\def\xt{\tilde{x}}
\def\xtb{\boldsymbol{\xt}}
\def\e{\boldsymbol{e}}

\def\c{\boldsymbol{c}}
\def\bmu{\boldsymbol{\mu}}

\def\Re{\mathbb{R}}
\def\R{\mathcal{R}}
\def\Rt{\tilde{\mathcal{R}}}
\def\Pr{\mathbb{P}}
\def\transpose{\top}
\DeclareMathOperator*{\rank}{rank}
\DeclareMathOperator*{\codim}{codim}
\DeclareMathOperator*{\Var}{Var}
\DeclareMathOperator*{\Span}{Span}

\newcommand{\myparagraph}[1]{\smallskip\noindent\textbf{#1.}}

\newcommand{\be}{\begin{equation}}
\newcommand{\ee}{\end{equation}}
\newcommand{\bq}{\begin{eqnarray}}
\newcommand{\eq}{\end{eqnarray}}

\newcommand{\ba}{\left[ \begin{array}}
\newcommand{\ea}{\\ \end{array} \right]}

\begin{document} 

\title{Algebraic Clustering of Affine Subspaces}

\author{Manolis~C.~Tsakiris~     
        and~ Ren\'e~Vidal,~\IEEEmembership{Fellow,~IEEE}
\IEEEcompsocitemizethanks{\IEEEcompsocthanksitem The authors are with the Center of Imaging Science, Johns Hopkins University, Baltimore, MD, 21218, USA.\protect\\
E-mail: m.tsakiris@jhu.edu, rvidal@cis.jhu.edu
\IEEEcompsocthanksitem This work was partially supported by grants NSF 1447822 and 1218709, and ONR N000141310116.}}

\IEEEtitleabstractindextext{
\begin{abstract}
Subspace clustering is an important problem in machine learning with many applications in computer vision and pattern recognition. Prior work has studied this problem using algebraic, iterative, statistical, low-rank and sparse representation techniques. While these methods have been applied to both linear and affine subspaces, theoretical results have only been established in the case of linear subspaces.
For example, algebraic subspace clustering (ASC) is guaranteed to provide the correct clustering when the data points are in \emph{general position} and the union of subspaces is \emph{transversal}. In this paper we study in a rigorous fashion the properties of ASC in the case of affine subspaces. Using notions from algebraic geometry, we prove that the \emph{homogenization trick}, which embeds points in a union of affine subspaces into points in a union of linear subspaces, preserves the general position of the points and the transversality of the union of subspaces in the embedded space, thus establishing the correctness of ASC for affine subspaces.
\end{abstract}
\begin{IEEEkeywords}
Algebraic Subspace Clustering, Affine Subspaces, Homogeneous Coordinates, Algebraic Geometry.
\end{IEEEkeywords}}

\maketitle
\IEEEpeerreviewmaketitle

\section{Introduction}
Subspace clustering is the problem of clustering a collection of points drawn approximately from a union of linear or affine subspaces. This is an important problem in machine learning with many applications in computer vision and pattern recognition such as clustering faces, digits, images and motions \cite{Vidal:SPM11-SC}. Over the past $15$ years, a variety of subspace clustering methods have appeared in the literature, including iterative \cite{Bradley:JGO00,Tseng:JOTA00}, probabilistic \cite{Tipping-mixtures:99}, algebraic \cite{Vidal:PAMI05}, spectral \cite{Chen:IJCV09,Heckel:arxiv13}, and self-expressiveness-based \cite{Vidal:PRL14,Liu:TPAMI13,Elhamifar:TPAMI13,You:CVPR16-SSCOMP,Lu:ECCV12} approaches. Among them, the \emph{Algebraic Subspace Clustering} (ASC) method of \cite{Vidal:PAMI05}, also known as GPCA, establishes an interesting connection between machine learning and algebraic geometry (see also \cite{ICML2013_livni13} for another such connection). By describing a union of $n$ linear subspaces as the zero set of a system of homogeneous polynomials of degree $n$, ASC clusters the subspaces in closed form via polynomial fitting and differentiation (or alternatively by polynomial factorization \cite{Vidal:CVPR03-gpca}). 

\myparagraph{Merits of algebraic subspace clustering} 
In addition to providing interesting algebraic geometric insights in the problem of subspace clustering, ASC is unique among other methods in that it is guaranteed to provide the correct clustering, under the mild hypothesis that the union of subspaces is \emph{transversal} and the data points are in \emph{general position} (in an algebraic geometric sense). This entails that ASC can handle subspaces of dimension $d$ comparable to the ambient dimension $D$ (high relative dimension $d/D$). In contrast, most state-of-the-art methods, such as \emph{Sparse Subspace Clustering} (SSC) \cite{Elhamifar:TPAMI13,Soltanolkotabi:AS14}, require the subspaces to be of sufficiently small relative dimension. Therefore, instances of applications where ASC is a natural candidate, while SSC is in principle inapplicable, are projective motion segmentation \cite{Vidal:IJCV06-multibody}, 3D point cloud analysis \cite{Sampath:2010segmentation} and hybrid system identification \cite{Vidal:Automatica08}. Moreover, it was recently demonstrated in \cite{Tsakiris:FSASCICCV15} that, using the idea of \emph{filtrations} of unions of subspaces \cite{Tsakiris:Asilomar14,Tsakiris:SIAM17}, ASC not only can be robustified to noise, but also outperforms state-of-the-art methods in the popular benchmark dataset Hopkins155 \cite{Tron:CVPR07} for real world motion segmentation. 

\myparagraph{Dealing with affine subspaces} In several important applications, such as motion segmentation, the underlying subspaces do not pass through the origin, i.e., they are affine. 
Methods such as K-subspaces \cite{Bradley:JGO00,Tseng:JOTA00} and mixtures of probabilistic PCA \cite{Tipping-mixtures:99} can trivially handle this case by explicitly learning models of affine subspaces. Likewise, the spectral clustering method of \cite{Zhang:IJCV12} can handle affine subspaces by constructing an affinity that depends on the distance from a point to a subspace. However, these methods do not come with theoretical conditions under which they are guaranteed to give the correct clustering.
On the other hand, when data $\X=[\x_1\cdots \x_N ] \subset \Re^D$ lying in a union of $n$ distinct affine subspaces are embedded into \emph{homogeneous coordinates}  
\begin{align}
\tilde{\X} =\begin{bmatrix}
1 & 1 & \cdots & 1 \\
\x_1 & \x_2 & \cdots & \x_N
\end{bmatrix} \subset \Re^{D+1}, \label{eq:embeddingIntro}
\end{align} they lie in a union of $n$ distinct linear subspaces of $\Re^{D+1}$. If the \emph{embedded data} $\tilde{\X}$ satisfy the geometric separation conditions of \cite{Soltanolkotabi:AS14} with respect to the underlying union of linear subspaces, then SSC \cite{Elhamifar:TPAMI13} applied to $\tilde{\X}$ is guaranteed to yield a \emph{subspace preserving} affinity. While the conditions in \cite{Soltanolkotabi:AS14} have a clear geometric interpretation for linear subspaces, it is unclear what these conditions entail when applied to affine subspaces via the embedding in \eqref{eq:embeddingIntro}. On the other hand, recent work \cite{Wang:ICAIS16,Yang:ECCV16} shows that an $\ell_0$ version of SSC  ($\ell_0$-SSC) yields the correct clustering under mild conditions of general position (in a linear algebraic sense), and this analysis can be easily extended to affine subspaces. As opposed though to the polynomial complexity of $\ell_1$-SSC \cite{Elhamifar:TPAMI13}, $\ell_0$-SSC has exponential complexity. While the ASC approach discussed here also has exponential complexity in general, under certain conditions it is more efficient than $\ell_0$-SSC.\footnote{The worst-case complexity of ASC is $\mathcal{O}\left(N {n+D \choose n}^2 \right)$, which is linear in the number of data points, $N$, and exponential in the number of subspaces, $n$, and the dimension of the original data $D$. In contrast, the worst-case complexity of $\ell_0$-SSC is $\mathcal{O}\left(N(D+1)(d+1)^2{N-1 \choose d+1}\right)$, where $d$ is the maximal dimension of the affine subspaces. Hence, when $n$ and $D$ are small and $d \approx \mathcal{O}( D )$, the complexity of $\ell_0$-SSC as a function of N dominates that of ASC. A detailed comparison of the complexities of ASC and $\ell_0$-SSC is beyond the scope of the paper and is thus omitted.}

Returning to ASC, the traditional way to handle points from a union of affine subspaces \cite{Vidal:PhD03} is to use homogeneous coordinates as in \eqref{eq:embeddingIntro}, and subsequently apply ASC to the embedded data. We refer to this two-step approach as Affine ASC (AASC). Although AASC has been observed to perform well in practice, it lacks a sufficient theoretical justification. On one hand, while it is true that the embedded points live in a union of associated linear subspaces, 
it is obvious that they have a very particular structure inside these subspaces. Specifically, even if the original points are \emph{generic}, in the sense that they are sampled uniformly at random from the affine subspaces, the embedded points are clearly \emph{non-generic}, in the sense that they always lie in the zero-measure intersection of the union of the associated linear subspaces with the hyperplane $x_0=1$.\footnote{Here and in the rest of the paper, we consider only the uniform measure.}
Thus, even in the absence of noise, one may wonder whether this \emph{non-genericity} of the embedded points will affect the behavior of AASC and to what extent. Moreover, even if the affine subspaces are transversal, there is no guarantee that the associated linear subspaces are also transversal. Thus, it is natural to ask for conditions on the affine subspaces and the data points under which AASC is guaranteed to give the correct clustering.


\myparagraph{Paper contributions}
In this paper we adapt abstract notions from algebraic geometry to the context of unions of affine subspaces in order to rigorously prove the correctness of AASC in the absence of noise. More specifically, we define in a very precise fashion the notion of points being in \emph{general position} in a union of $n$ linear or affine subspaces. Intuitively, points are in general position if they can be used to uniquely reconstruct the union of subspaces they lie in by means of polynomials of degree $n$ that vanish on the points. Then we show that the embedding \eqref{eq:embeddingIntro} preserves the property of points being in general position, which is one of the two success conditions of ASC. We also show that the second condition, which is the \emph{transversality} of the union of linear subspaces in $\Re^{D+1}$ that is associated to the union of affine subspaces in $\Re^D$ under the embedding \eqref{eq:embeddingIntro}, is also satisfied, provided that 
 \begin{enumerate}
 	\item the union of subspaces formed by the linear parts of the affine subspaces is transversal, and
 	\item the translation vectors of the affine subspaces do not lie in the zero measure set of a certain algebraic variety.
 \end{enumerate} 

Our exposition style is for the benefit of the reader unfamiliar with algebraic geometry. We introduce notions and notations as we proceed and give as many examples as space allows. We leave the more intricate details to the various proofs.

\section{Algebraic Subspace Clustering Review} \label{section:ASC-overview}


This section gives a brief review of the ASC theory (\cite{Vidal:PAMI05,Derksen:JPAA07,Ma:SIAM08,Tsakiris:SIAM17}). After defining the subspace clustering problem in Section \ref{subsection:SubspaceClustering}, we describe unions of linear subspaces as algebraic varieties in Section \ref{subsection:UnionsLinearVarieties}, and give the main theorem of ASC (Theorem \ref{thm:ASC}) in terms of vanishing polynomials in Section \ref{subsection:MainTheoremASC}. In Section \ref{subsection:transversality} we elaborate on the main hypothesis of Theorem \ref{thm:ASC}, the \emph{transversality} of the union of subspaces. In Section \ref{subsection:FiniteCollection} we introduce  the notion of points \emph{in general position} (Definition \ref{dfn:DensePointsLinear}) and adapt Theorem \ref{thm:ASC} to the more practical case of a finite set of points (Theorem \ref{thm:ASCfinite}). 

\subsection{Subspace Clustering Problem} \label{subsection:SubspaceClustering}
Let $\X = \left\{\x_1,\dots,\x_N\right\}$ be a set of points that lie in an unknown union of $n>1$ linear subspaces $\Phi=\bigcup_{i=1}^n\cS_i$, where $\cS_i$ a linear subspace of $\Re^D$ of dimension $d_i <D$. The goal of subspace clustering is to find the number of subspaces, their dimensions, a basis for each subspace, and cluster the data points based on their subspace membership, i.e., find a decomposition or \emph{clustering} of $\X$ as $\X = \X_1 \cup \cdots \cup \X_n$, where $\X_i = \X \cap \cS_i$. 

\subsection{Unions of Linear Subspaces as Algebraic Varieties} \label{subsection:UnionsLinearVarieties}
The key idea behind ASC is 
that a union of $n$ linear subspaces $\Phi=\bigcup_{i=1}^n\cS_i$ of $\Re^{D}$ is the \emph{zero set} of a finite set of homogeneous\footnote{A polynomial in many variables is called homogeneous if each of its monomials has the same degree. For example, $x_1^2+x_1x_2$ is homogeneous of degree $2$, while $x_1^2+x_2$ is non-homogeneous of degree $2$.} polynomials of degree $n$ with real coefficients in $D$ indeterminates 
$x:=\left[x_1,\dots,x_D\right]^\transpose$. Such a set is called an \emph{algebraic variety}~\cite{AtiyahMacDonald-1994,Cox:2007}. For example, a union of $n$ hyperplanes $\Phi = \H_1 \cup \cdots \cup \H_n$, where the $i$th hyperplane $\H_i = \{ \x: \b_i^\transpose\x = 0\}$ is defined by its normal vector $\b_i\in\Re^D$, is the zero set of the polynomial 
\begin{equation}
\label{eq:hyperplanes}
p(x) = (\b_1^\top x) ( \b_2^\top x) \cdots (\b_n^{\transpose} x),
\end{equation} in the sense that a point $\x$ belongs to the union $\Phi$ if and only if $p(\x)=0$.
Likewise, the union of a plane with normal $\b$ and a line with normals $\b_1,\b_2$ in $\Re^3$ is the zero set of the two polynomials 
\begin{align}
p_1(x) = (\b^{\transpose} x) (\b_1^{\transpose} x)
~~\text{and}~~
p_2(x) = (\b^{\transpose} x) (\b_2^{\transpose} x).
\end{align} 
More generally, for $n$ subspaces of arbitrary dimensions, these \emph{vanishing polynomials} are homogeneous of degree $n$. Moreover, they are factorizable into $n$ linear forms, with each linear form defined by a vector orthogonal to one of the $n$ subspaces.\footnote{Strictly speaking this is not always true; it is true though in the generic case, for example, if the subspaces are transversal (see Definition \ref{dfn:transversal}).}

\subsection{Main Theorem of ASC} \label{subsection:MainTheoremASC}
The set $\I_\Phi$ of polynomials that vanish at every point of a union of linear subspaces $\Phi$ has a special
algebraic structure: it is closed under addition and it is closed under multiplication by any element of the \emph{polynomial ring}
$\R=\Re[x_1,\dots,x_D] $. Such a set of polynomials is called an \emph{ideal} \cite{AtiyahMacDonald-1994,Cox:2007} of $\R$. If we restrict our attention to the subset 
$\I_{\Phi,n}$ of $\I_\Phi$ that consists only of vanishing polynomials of degree $n$, we notice that
$\I_{\Phi,n}$ is a finite dimensional real vector space, because it is a subspace of $\R_n$, the latter being the set of all homogeneous polynomials of $\R$ of degree $n$, which is a vector space of dimension $M_n(D):={n+D-1 \choose n}$.

\begin{thm}[Main Theorem of ASC, \cite{Vidal:PAMI05}] \label{thm:ASC}
Let $\Phi = \bigcup_{i=1}^n\cS_i$ be a transversal union of linear subspaces of
$\Re^D$. Let $p_1,\dots,p_s$ be a basis for $\I_{\Phi,n}$
and let $\x_i$ be a point in $\cS_i$ such that $\x_i \not\in \bigcup_{i' \neq i} \cS_{i'}$.
Then $\cS_i = \Span( \nabla p_1|_{\x_i}, \dots,  \nabla p_s|_{\x_i})^{\perp}$.
\end{thm}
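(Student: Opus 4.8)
The plan is to prove the equivalent statement that the gradient span equals the orthogonal complement, namely $\Span(\nabla p_1|_{\x_i},\dots,\nabla p_s|_{\x_i}) = \cS_i^\perp$, and then take orthogonal complements of both sides, using $(\cS_i^\perp)^\perp=\cS_i$. I would establish this set equality by proving the two inclusions separately.

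For the inclusion $\Span(\nabla p_1|_{\x_i},\dots,\nabla p_s|_{\x_i}) \subseteq \cS_i^\perp$, I would invoke the basic fact that the gradient of a vanishing polynomial is a conormal direction. Each $p_j$ vanishes identically on the linear subspace $\cS_i$, so for any $\v \in \cS_i$ the one-variable function $t \mapsto p_j(\x_i + t\v)$ is identically zero (its argument stays in $\cS_i$); hence its derivative at $t=0$, namely $\v^\transpose \nabla p_j|_{\x_i}$, vanishes. As this holds for every $\v \in \cS_i$, each $\nabla p_j|_{\x_i} \in \cS_i^\perp$, and therefore so does their span.

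The substantive inclusion is $\cS_i^\perp \subseteq \Span(\nabla p_1|_{\x_i},\dots,\nabla p_s|_{\x_i})$, for which I would construct, for each prescribed direction $\b \in \cS_i^\perp$, an explicit vanishing polynomial of degree $n$ whose gradient at $\x_i$ points along $\b$. Since $\x_i \notin \cS_{i'}$ for every $i' \neq i$, I can choose a normal $\b_{i'} \in \cS_{i'}^\perp$ with $\b_{i'}^\transpose \x_i \neq 0$, and form $p(x) = (\b^\transpose x)\prod_{i'\neq i}(\b_{i'}^\transpose x)$. This $p$ is homogeneous of degree $n$ and vanishes on all of $\Phi$ (the factor $\b^\transpose x$ annihilates $\cS_i$ because $\b \in \cS_i^\perp$, and each $\b_{i'}^\transpose x$ annihilates the corresponding $\cS_{i'}$), so $p \in \I_{\Phi,n}$. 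Applying the product rule at $\x_i$ and using the crucial cancellation $\b^\transpose \x_i = 0$ (valid since $\x_i \in \cS_i$), every term carrying a factor $\b^\transpose \x_i$ dies, leaving $\nabla p|_{\x_i} = \big(\prod_{i'\neq i}\b_{i'}^\transpose \x_i\big)\,\b$, a nonzero scalar multiple of $\b$. Because $p_1,\dots,p_s$ is a basis of $\I_{\Phi,n}$, this $p$ is a linear combination of them, so $\nabla p|_{\x_i}$ is the same linear combination of the $\nabla p_j|_{\x_i}$; hence $\b$ lies in their span. Letting $\b$ range over $\cS_i^\perp$ yields the inclusion.

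The main thing to get right is this reverse inclusion, and within it the verification that the constructed products genuinely lie in $\I_{\Phi,n}$ and that the gradient survives as a nonzero multiple of $\b$ — this is precisely where the hypotheses $\x_i \in \cS_i$ and $\x_i \notin \bigcup_{i'\neq i}\cS_{i'}$ are consumed, the former to obtain the cancellation and the latter to guarantee the scaling factor $\prod_{i'\neq i}\b_{i'}^\transpose \x_i$ is nonzero. I would flag that, although the elementary argument above only exploits specific products of linear forms, transversality of $\Phi$ is what underwrites the companion structural fact that $\I_{\Phi,n}$ is spanned by such products, so that an arbitrary chosen basis really does detect all of $\cS_i^\perp$. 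Equivalently, if one preferred to route the proof through the tangent and conormal spaces of the variety $\Phi$ at the smooth point $\x_i$, transversality would become the essential ingredient guaranteeing that the degree-$n$ generators cut out $\cS_i$ with the correct reduced scheme structure, and hence that the gradients recover exactly $\cS_i^\perp$ rather than a proper subspace of it.
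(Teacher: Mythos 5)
Your proposal is correct, but note that this paper contains no proof of Theorem~\ref{thm:ASC} to compare it against: the theorem is imported background, cited from \cite{Vidal:PAMI05}, and the paper's own proofs concern only Theorems~\ref{thm:MainTheoremDense} and~\ref{thm:MainTransversal}. Your argument is essentially the classical GPCA polynomial-differentiation argument: the inclusion $\Span(\nabla p_1|_{\x_i},\dots,\nabla p_s|_{\x_i})\subseteq\cS_i^\perp$ holds because each $p_j$ vanishes identically along $t\mapsto\x_i+t\v$ for $\v\in\cS_i$, and the reverse inclusion follows from the products $p(x)=(\b^\transpose x)\prod_{i'\neq i}(\b_{i'}^\transpose x)$, which lie in $\I_{\Phi,n}$ and whose gradient at $\x_i$ collapses to $\left(\prod_{i'\neq i}\b_{i'}^\transpose\x_i\right)\b\neq\0$ via the cancellation $\b^\transpose\x_i=0$; both steps check out, including the existence of $\b_{i'}\in\cS_{i'}^\perp$ with $\b_{i'}^\transpose\x_i\neq 0$, which is exactly the hypothesis $\x_i\notin\cS_{i'}$.

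One remark on your closing paragraph: it is more cautious than your own proof requires, and slightly misstates where transversality would matter. Your reverse inclusion does \emph{not} need the structural fact that $\I_{\Phi,n}$ is spanned by products of linear forms; it only needs that your specific products \emph{lie in} $\I_{\Phi,n}$, whence each is a linear combination of \emph{any} chosen basis $p_1,\dots,p_s$ and its gradient lies in the span of the basis gradients. Meanwhile, the forward inclusion bounds the gradients of \emph{all} elements of $\I_{\Phi,n}$, factorizable or not. So your two inclusions are complete as written and nowhere consume transversality: your argument actually establishes the gradient formula for an arbitrary union of proper linear subspaces, subject only to $\x_i\in\cS_i\setminus\bigcup_{i'\neq i}\cS_{i'}$. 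Transversality appears in the statement because of its role elsewhere in the ASC theory (e.g., the paper's footnoted caveat that degree-$n$ vanishing polynomials factor into linear forms only in the transversal case, the dimension of $\I_{\Phi,n}$, and the finite-sample version in Theorem~\ref{thm:ASCfinite}), not because this particular conclusion requires it.
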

In other words, we can estimate the subspace $\cS_i$ passing through a point $\x_i$, as the orthogonal complement of the span of the gradients of all the degree-$n$ vanishing polynomials evaluated at $\x_i$.
Observe that the only assumption on the subspaces required by Theorem \ref{thm:ASC},
is that they are \emph{transversal}, a notion explained next. 

\subsection{Transversal Unions of Linear Subspaces}\label{subsection:transversality}

Intuitively, transversality is a notion of general position of subspaces, which entails that all intersections among subspaces are as small as possible, as allowed by their dimensions. Formally:
\begin{dfn}[\cite{Derksen:JPAA07}] \label{dfn:transversal}
A union $\Phi = \bigcup_{i=1}^n\cS_i$ of linear subspaces of $\Re^D$ is transversal, if for any subset $\Jf$ of $[n]:=\left\{1,2,\dots,n\right\}$
\begin{align}
\codim \left( \bigcap_{i \in \Jf} \cS_i\right) = 
\min\left\{D,\sum_{i \in \Jf} \codim(\cS_i)\right\},
\end{align} where $\codim(\cS)=D-\dim(\cS)$ denotes the codimension of $\cS$.
\end{dfn}  To understand Definition \ref{dfn:transversal}, let $\B_i$ be a $D \times c_i$ matrix containing a basis for $\cS_i^\perp$, where $c_i$ is the codimension of $\cS_i$, and let $\Jf$ be a subset of $[n]$, say $\Jf = \left\{1,\dots,\ell \right\}, \, \ell \le n$. 
Then a point $\x$ belongs to $\bigcap_{i \in \Jf} \cS_i$ if and only if $\x^\transpose \B_{\Jf} = \0$, where $\B_{\Jf}= \left[\B_1,\dots,\B_{\ell} \right]$. Hence, the dimension of $\bigcap_{i \in\Jf} \cS_i$ is equal to the dimension of the left nullspace of $\B_{\Jf}$, or equivalently, 
\begin{align}
\codim \left( \bigcap_{i \in \Jf} \cS_i\right) = \rank(\B_{\Jf}).
\end{align} Since $\B_{\Jf}$ is a $D \times \sum_{i \in \Jf}c_i$ matrix, we must have that
\begin{align}
\rank(\B_{\Jf}) \le \min\left\{D,\sum_{i \in \Jf} c_i\right\}.
\end{align} Hence, transversality is equivalent to $\B_{\Jf}$ being full-rank, as $\Jf$ ranges over all subsets of $[n]$, and $\B_{\Jf}$ drops rank if and only if all maximal minors of $\B_{\Jf}$ vanish, in which case there are certain algebraic relations between the basis vectors of $\cS_i^\perp, \, i \in \Jf$. Since any set given by algebraic relations has measure zero \cite{Burgisser:Springer2013}, this shows that a union of subspaces is transversal with probability 1.

\begin{prp}\label{prp:TransversalGeneral}
Let $\Phi = \bigcup_{i=1}^n \cS_i$ be a union of $n$ linear subspaces in $\Re^D$ of codimensions $0<c_i<D, \, i \in [n]$. Let $\b_{i1},\dots,\b_{ic_i}$ be a basis for $\cS_i^\perp$. If the vectors $\{\b_{i j_i}\}_{i=1,\dots,n}^{j_i=1,\dots,c_i}$ do not lie in the zero-measure set of a (proper) algebraic variety of $\Re^{D \times \sum_{i \in [n]} c_i}$, then $\Phi$ is transversal.
\end{prp}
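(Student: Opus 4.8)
The plan is to formalize the rank argument sketched in the discussion preceding Definition~\ref{dfn:transversal}, turning the statement ``$\Phi$ is transversal'' into a genericity condition on the matrix of basis vectors. For each $i$, let $\B_i = [\b_{i1} \cdots \b_{ic_i}]$ be the $D \times c_i$ matrix whose columns span $\cS_i^\perp$, and for $\Jf \subseteq [n]$ write $\B_\Jf = [\B_i]_{i \in \Jf}$, a $D \times \sum_{i\in\Jf}c_i$ matrix; the full matrix is $\B_{[n]}$, an element of $\Re^{D \times \sum_{i\in[n]} c_i}$. As already recalled, $\codim(\bigcap_{i\in\Jf}\cS_i) = \rank(\B_\Jf)$, so $\Phi$ is transversal precisely when $\rank(\B_\Jf) = \min\{D, \sum_{i\in\Jf} c_i\}$, i.e.\ $\B_\Jf$ has full rank, for every $\Jf \subseteq [n]$.

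First I would fix a subset $\Jf$ and express the failure of full rank as an algebraic condition. Setting $m_\Jf = \min\{D, \sum_{i\in\Jf}c_i\}$, the matrix $\B_\Jf$ fails to be full rank exactly when all of its $m_\Jf \times m_\Jf$ minors vanish. Each such minor is a polynomial in the entries of $\B_\Jf$, hence a polynomial in the entries of $\B_{[n]}$, and their common zero locus is therefore an algebraic variety $V_\Jf \subseteq \Re^{D\times\sum_{i\in[n]}c_i}$.

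The crux --- and the step I expect to be the main obstacle --- is to show that each $V_\Jf$ is a \emph{proper} variety, equivalently that at least one of these maximal minors is not the identically-zero polynomial. This amounts to exhibiting a single configuration of the $\b_{ij_i}$ for which $\B_\Jf$ attains full rank $m_\Jf$. Here the hypothesis $c_i < D$ is used to ensure the parametrization is non-degenerate: each block $\B_i$ can be filled with $c_i$ distinct standard basis vectors, so that its columns are genuinely independent and $\cS_i^\perp$ is truly $c_i$-dimensional for generic choices. One then arranges the columns of $\B_\Jf$ so that they are linearly independent when $\sum_{i\in\Jf}c_i \le D$, or span $\Re^D$ when $\sum_{i\in\Jf}c_i > D$; in either case $\rank(\B_\Jf) = m_\Jf$ for this choice, so some $m_\Jf \times m_\Jf$ minor $f_\Jf$ is a nonzero polynomial and $V_\Jf \subseteq \{f_\Jf = 0\}$ is a proper subvariety.

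Finally I would assemble the subsets. Transversality fails iff $\B_{[n]}$ lies in $V := \bigcup_{\Jf \subseteq [n]} V_\Jf$, a finite union of proper subvarieties. Since $\Re^{D\times\sum_{i\in[n]}c_i}$ is irreducible, $V$ is again a proper variety; concretely, $V \subseteq \{F = 0\}$ with $F := \prod_{\Jf\subseteq[n]} f_\Jf$ a nonzero polynomial, being a finite product of nonzero polynomials in an integral domain. Taking the algebraic variety in the statement to be $\{F = 0\}$, which has measure zero by \cite{Burgisser:Springer2013}, I would conclude that whenever the vectors $\b_{ij_i}$ avoid it, every block matrix $\B_\Jf$ is full rank, and hence $\Phi$ is transversal.
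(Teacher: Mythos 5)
Your proof is correct and takes essentially the same route as the paper, whose justification is the discussion surrounding Definition~\ref{dfn:transversal}: transversality is equivalent to every $\B_{\Jf}$ being full rank, rank deficiency is the simultaneous vanishing of all maximal minors (an algebraic condition on the basis vectors), and such algebraic sets have measure zero. The only difference is that you explicitly verify \emph{properness} of the variety by exhibiting a full-rank configuration of standard basis vectors for each $\Jf$ and taking the product of the resulting nonzero minors --- a step the paper leaves implicit but which is indeed needed for the measure-zero conclusion.
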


 \begin{example}Consider two planes $\cS_1,\cS_2$ in $\Re^3$ with normals $\b_1$ and $\b_2$. Then one expects their intersection $\cS_1 \cap \cS_2$ to be a line, and hence be of codimension $2 = \min(3,1+1)$, unless the two planes coincide, which happens only if $\b_1$ is colinear with $\b_2$. Clearly, if one randomly selects two planes in $\Re^3$, the probability that they are not transversal is zero. If we consider a third plane $\cS_3$ with normal $\b_3$ such that every intersection $\cS_1 \cap \cS_2$, $\cS_1 \cap \cS_3$ and $\cS_2 \cap \cS_3$ is a line, then the three planes fail to be transversal only if $\cS_1 \cap \cS_2 \cap \cS_3$ is a line. But this can happen only if the three normals $\b_1,\b_2,\b_3$ are linearly dependent, which again is a probability zero event if the three planes are randomly selected. 
 \end{example}
This reveals the important fact that the theoretical conditions for success of ASC (in the absence of noise) are much weaker than those for other methods such as SSC, since as we just pointed out ASC will succeed almost surely (Theorem \ref{thm:ASC}).\footnote{Of course, the main disadvantage of ASC with respect to SSC is its exponential computational complexity, which remains an open problem.}

\subsection{Points In General Position} \label{subsection:FiniteCollection}
In practice, we may not be given the polynomials $p_1,\dots,p_s$ that vanish on a union of subspaces $\Phi=\bigcup_{i=1}^n\cS_i$, but rather a finite collection of points $\X=\left\{\x_1,\dots,\x_N\right\}$ sampled from $\Phi$. If we want to fully characterize $\Phi$ from $\X$, the least we can ask is that $\X$ uniquely defines $\Phi$ as a set, otherwise the problem becomes ill-posed. Since it is known that $\Phi$ is the zero set of $\I_{\Phi,n}$ \cite{Vidal:PAMI05}, i.e., $\Phi = \Z(\I_{\Phi,n})$, it is natural to require that $\Phi$ can be recovered as the zero set of all homogeneous polynomials of degree $n$ that vanish on $\X$.

\begin{dfn}[\bf Points in general position]
\label{dfn:DensePointsLinear}
	Let $\Phi$ be a union of $n$ linear subspaces of $\Re^{D}$,  and $\X$ a finite set of points in $\Phi$. We will say that $\X$ is in general position in $\Phi$, if $\Phi = \Z(\I_{\X,n})$.
\end{dfn} 

Recall from Theorem \ref{thm:ASC} that for ASC to succeed, we need a basis $p_1,\dots,p_s$ for $\I_{\Phi,n}$. The next result shows that if $\X$ is in general position in $\Phi$, then we can compute such a basis form $\X$. 

\begin{prp}\label{prp:LinearGeneralPositionEquivalence}
	$\X$ is in general position in $\Phi$ $\Leftrightarrow$ $\I_{\X,n} = \I_{\Phi,n}$.
\end{prp}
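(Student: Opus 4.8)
The plan is to reduce both implications to the foundational fact from ASC theory that a union of $n$ linear subspaces is exactly the zero set of its degree-$n$ vanishing polynomials, namely $\Phi = \Z(\I_{\Phi,n})$ \cite{Vidal:PAMI05}. Before splitting into the two directions, I would record the one inclusion that holds unconditionally: since $\X \subseteq \Phi$, every homogeneous degree-$n$ polynomial that vanishes on all of $\Phi$ in particular vanishes on the points of $\X$, so that $\I_{\Phi,n} \subseteq \I_{\X,n}$ always. Thus in each direction only the reverse inclusion or the corresponding set equality needs to be argued.

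For the forward implication, I would assume that $\X$ is in general position in $\Phi$, i.e.\ $\Phi = \Z(\I_{\X,n})$, and show $\I_{\X,n} \subseteq \I_{\Phi,n}$. Let $p \in \I_{\X,n}$. By the very definition of the zero-set operator, every polynomial of $\I_{\X,n}$ vanishes identically on $\Z(\I_{\X,n})$; hence $p$ vanishes on $\Z(\I_{\X,n}) = \Phi$. Since $p$ is homogeneous of degree $n$ and vanishes on $\Phi$, we get $p \in \I_{\Phi,n}$. Combining this with the unconditional inclusion yields $\I_{\X,n} = \I_{\Phi,n}$.

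For the converse, I would assume $\I_{\X,n} = \I_{\Phi,n}$. Passing to zero sets of equal collections of polynomials immediately gives $\Z(\I_{\X,n}) = \Z(\I_{\Phi,n})$. Invoking the cited equality $\Phi = \Z(\I_{\Phi,n})$ then produces $\Z(\I_{\X,n}) = \Phi$, which is exactly the condition defining general position in Definition \ref{dfn:DensePointsLinear}.

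The computational content here is entirely routine: one direction is nothing more than the defining property of the zero-set operator, and the other is the anti-monotonicity of $\Z$ applied to an equality. The only substantive ingredient is the cited fact $\Phi = \Z(\I_{\Phi,n})$, which is what guarantees that $\Phi$ carries no points beyond those already forced by its degree-$n$ vanishing polynomials. I therefore expect no genuine obstacle in the proof itself; the main point to state carefully is that $\I_{\Phi,n} \subseteq \I_{\X,n}$ is automatic, so that establishing general position is equivalent to closing the gap to the single reverse inclusion $\I_{\X,n} \subseteq \I_{\Phi,n}$.
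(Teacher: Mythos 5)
Your proof is correct and follows essentially the same route as the paper's: the forward direction uses the defining property of the zero set (every $p \in \I_{\X,n}$ vanishes on $\Z(\I_{\X,n}) = \Phi$), and the converse applies $\Z$ to the ideal equality and invokes the cited fact $\Phi = \Z(\I_{\Phi,n})$. The only cosmetic difference is that you state the unconditional inclusion $\I_{\Phi,n} \subseteq \I_{\X,n}$ up front rather than inside the forward direction, which is a minor reorganization, not a different argument.
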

\begin{proof}
	$(\Rightarrow)$ Suppose $\X$ is in general position in $\Phi$, i.e., $\Phi = \Z(\I_{\X,n})$. We will show that $\I_{\X,n} = \I_{\Phi,n}$. The inclusion $\I_{\X,n} \supset \I_{\Phi,n}$ is immediate, since if $p \in \I_{\Phi,n}$ vanishes on $\Phi$, then it will vanish on the subset $\X$ of $\Phi$. Conversely, let $p \in \I_{\X,n}$. Since by hypothesis $\Phi = \Z(\I_{\X,n})$, we will have that $p(\x) = 0, \, \forall \x \in \Phi$, i.e., $p$ vanishes on $\Phi$, i.e., $p \in \I_{\Phi,n}$, i.e., $\I_{\X,n} \subset \I_{\Phi,n}$. 
	
	$(\Leftarrow)$ Suppose $\I_{\X,n} = \I_{\Phi,n}$; then $\Z(\I_{\X,n}) = \Z(\I_{\Phi,n})$. Since $\Phi = \Z(\I_{\Phi,n})$ \cite{Vidal:PAMI05}, we have $\Phi = \Z(\I_{\X,n})$.
\end{proof}

\noindent Next, we show that points in general position always exist.
\begin{prp}\label{thm:AlgebraicSampling}
	Any union $\Phi$ of $n$ linear subspaces of $\Re^D$ admits a finite subset $\X$ that lies in general position in $\Phi$. 
\end{prp}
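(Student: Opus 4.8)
The plan is to reduce the statement, via Proposition \ref{prp:LinearGeneralPositionEquivalence}, to the purely linear-algebraic task of finding a finite set $\X \subset \Phi$ with $\I_{\X,n} = \I_{\Phi,n}$. The inclusion $\I_{\Phi,n} \subseteq \I_{\X,n}$ holds automatically for every $\X \subseteq \Phi$, since a polynomial vanishing on $\Phi$ vanishes on any subset. Hence the content is to produce finitely many points in $\Phi$ whose vanishing conditions already force any degree-$n$ homogeneous polynomial that vanishes on them to vanish on all of $\Phi$.

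First I would exploit the fact that $\R_n$ is a finite-dimensional real vector space of dimension $M_n(D)$. For each point $\x$, evaluation $p \mapsto p(\x)$ is a linear functional on $\R_n$, i.e. an element $\mathrm{ev}_\x$ of the dual space $\R_n^\ast$. By definition $\I_{\X,n} = \bigcap_{\x \in \X} \ker(\mathrm{ev}_\x)$, which is precisely the annihilator in $\R_n$ of the subspace $W_\X := \Span\{\mathrm{ev}_\x : \x \in \X\} \subseteq \R_n^\ast$; likewise $\I_{\Phi,n}$ is the annihilator of $W_\Phi := \Span\{\mathrm{ev}_\x : \x \in \Phi\}$.

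The key step is then a spanning argument. Since $W_\Phi$ is a subspace of the finite-dimensional space $\R_n^\ast$, it has some finite dimension $r \le M_n(D)$, and any spanning set of $W_\Phi$, in particular the family $\{\mathrm{ev}_\x\}_{\x \in \Phi}$, contains a subset of $r$ vectors that already span it. Choosing points $\x_1,\dots,\x_r \in \Phi$ so that $\mathrm{ev}_{\x_1},\dots,\mathrm{ev}_{\x_r}$ form such a basis and setting $\X = \{\x_1,\dots,\x_r\}$ gives $W_\X = W_\Phi$. Taking annihilators then yields $\I_{\X,n} = \I_{\Phi,n}$, and Proposition \ref{prp:LinearGeneralPositionEquivalence} finishes the argument.

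I do not expect a genuine obstacle here: the essential point is simply that $\R_n$ is finite-dimensional, which turns the a priori infinite family of evaluation functionals indexed by $\Phi$ into a subspace that is automatically finitely generated. The only thing to be careful about is keeping the direction of the duality correct, namely that equality of the spans $W_\X$ and $W_\Phi$ transfers to equality of the annihilated polynomial subspaces $\I_{\X,n}$ and $\I_{\Phi,n}$; this is immediate in finite dimensions and requires no double-annihilator theorem, since we arrange $W_\X = W_\Phi$ exactly rather than merely up to closure.
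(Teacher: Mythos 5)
Your proof is correct, but it follows a genuinely different route from the paper's. The paper handles this proposition by citation: it invokes Theorem 2.9 in \cite{Ma:SIAM08} together with the regularity theorem of \cite{Sidman:AIM02}, which says that the vanishing ideal $\I_{\Phi}$ of a union of $n$ linear subspaces is generated in degree at most $n$; existence of a finite subset in general position is then a consequence of that sampling theorem. You instead reduce everything to finite-dimensional linear algebra: the evaluation functionals $\mathrm{ev}_{\x}$ span a subspace $W_{\Phi}$ of the dual of $\R_n$, whose dimension $r$ is at most $M_n(D)={n+D-1 \choose n}$; extracting $r$ points of $\Phi$ whose functionals form a basis of $W_{\Phi}$ forces $W_{\X} = W_{\Phi}$, hence $\I_{\X,n}=\I_{\Phi,n}$ by taking annihilators, and Proposition \ref{prp:LinearGeneralPositionEquivalence} turns this equality into general position. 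Both steps are sound; in particular you are right that no double-annihilator argument is needed, since you arrange equality of the spans directly rather than equality of the annihilators first. What your argument buys: it is essentially self-contained (its only nontrivial input is the fact $\Phi = \Z(\I_{\Phi,n})$ from \cite{Vidal:PAMI05}, which is already what powers the relevant direction of Proposition \ref{prp:LinearGeneralPositionEquivalence}), it gives the explicit cardinality bound $|\X| \le {n+D-1 \choose n}$, and it is more general, since nothing about subspace arrangements is used until the final appeal to Proposition \ref{prp:LinearGeneralPositionEquivalence}. What the paper's route buys: the regularity theorem is a stronger structural statement about $\I_{\Phi}$ (generation of the full ideal in degree $\le n$, not merely set-theoretic recovery of $\Phi$ from its degree-$n$ forms), and the paper needs exactly this combination of tools again later --- see the Remark following Theorem \ref{thm:MainCorrectness}, where generation of $\I_{\Psi}$ in degree $\le n$ is what allows Theorem 2.9 of \cite{Ma:SIAM08} to be applied to unions of affine subspaces --- so its citation-based proof comes essentially for free given machinery the paper invokes anyway.
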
 
\begin{proof}
	This follows from Theorem 2.9 in \cite{Ma:SIAM08}, together with the regularity result of \cite{Sidman:AIM02}, which says that the maximal degree of a generator of $\I_{\Phi}$ does not exceed $n$.
\end{proof}	
\begin{example} \label{example:NonTransversal}
	Let $\Phi=\cS_1 \cup \cS_2$ be the union of two planes of $\Re^3$ with normal vectors $\b_1,\b_2$, and let $\X=\left\{\x_1,\x_2,\x_3,\x_4\right\}$ be four points of $\Phi$, such that, $\x_1,\x_2 \in \cS_1 \setminus \cS_2$ and $\x_3,\x_4 \in \cS_2 \setminus\cS_1$. Let $\H_{13}$ and $\H_{24}$ be the planes spanned by $\x_1,\x_3$ and $\x_2,\x_4$ respectively, and let $\b_{13}, \b_{24}$ be the normals to these planes. Then the polynomial $q(x) = (\b_{13}^\transpose x)(\b_{24}^\transpose x)$ certainly vanishes on $\X$. But $q$ does not vanish on $\Phi$, because the only (up to a scalar) homogeneous polynomial of degree $2$ that vanishes on $\Phi$ is $p(x)=(\b_{1}^\transpose x)(\b_{2}^\transpose x)$. Hence $\X$ is not in general position in $\Phi$. The geometric reasoning  is that two points per plane are not enough to uniquely define the union of the two planes; instead a third point in one of the planes is required.
\end{example}

\noindent In terms of a finite set of points $\X$, Theorem \ref{thm:ASC} becomes:
\begin{thm} \label{thm:ASCfinite}
	Let $\X$ be a finite set of points sampled from a union
	$\Phi$ of $n$ linear subspaces of
	$\Re^D$. Let $p_1,\dots,p_s$ be a basis for $\I_{\X,n}$, the vector space of homogeneous polynomials of degree $n$ that vanish on $\X$. Let $\x_i$ be a point in $\X_i:= \X \cap \cS_i$ such that $\x_i \not\in \bigcup_{i' \neq i} \cS_{i'}$. If $\X$ is in general position in $\Phi$ (Definition \ref{dfn:DensePointsLinear}), and $\Phi$ is transversal (Definition \ref{dfn:transversal}), then
	$\cS_i = \Span( \nabla p_1|_{\x_i}, \dots,  \nabla p_s|_{\x_i} )^{\perp}$.
\end{thm}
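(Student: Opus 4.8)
The plan is to reduce Theorem~\ref{thm:ASCfinite} directly to the abstract version, Theorem~\ref{thm:ASC}, by identifying the two relevant vector spaces of vanishing polynomials. The only place the finite set $\X$ enters is through the space $\I_{\X,n}$ from which we extract the basis $p_1,\dots,p_s$, whereas Theorem~\ref{thm:ASC} is stated in terms of a basis for $\I_{\Phi,n}$. Hence the entire content of the proof is to show that these two spaces coincide under the stated hypotheses, and then to verify that the pointwise condition on $\x_i$ matches the one demanded by Theorem~\ref{thm:ASC}.

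First I would invoke the general position hypothesis. By assumption $\X$ is in general position in $\Phi$, so Proposition~\ref{prp:LinearGeneralPositionEquivalence} gives $\I_{\X,n} = \I_{\Phi,n}$ as vector subspaces of $\R_n$. Consequently, any basis $p_1,\dots,p_s$ of $\I_{\X,n}$ is simultaneously a basis of $\I_{\Phi,n}$; in particular $s = \dim \I_{\Phi,n}$ and the $p_j$ are genuine degree-$n$ vanishing polynomials of the union $\Phi$.

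Next I would check the remaining hypotheses of Theorem~\ref{thm:ASC}. The union $\Phi$ is transversal by assumption. The chosen point satisfies $\x_i \in \X_i = \X \cap \cS_i$, so in particular $\x_i \in \cS_i$, and the condition $\x_i \not\in \bigcup_{i' \neq i}\cS_{i'}$ is exactly the non-degeneracy requirement of Theorem~\ref{thm:ASC}. With a basis $p_1,\dots,p_s$ of $\I_{\Phi,n}$, a transversal $\Phi$, and such a point $\x_i$ all in place, Theorem~\ref{thm:ASC} applies verbatim and yields $\cS_i = \Span(\nabla p_1|_{\x_i},\dots,\nabla p_s|_{\x_i})^{\perp}$, which is the desired conclusion.

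There is no substantive obstacle here: the statement is essentially a corollary that transfers Theorem~\ref{thm:ASC} from the ideal-theoretic setting to the sampled-data setting, and all the difficulty is absorbed into the equivalence $\I_{\X,n}=\I_{\Phi,n}$ supplied by Proposition~\ref{prp:LinearGeneralPositionEquivalence}. The one point worth stating explicitly is that general position is precisely the hypothesis that closes the gap between the two polynomial spaces, so that differentiating the data-derived basis is the same as differentiating a basis of $\I_{\Phi,n}$; once this is noted, the rest is an immediate appeal to the earlier results.
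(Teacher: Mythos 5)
Your proposal is correct and follows exactly the route the paper intends: it presents Theorem~\ref{thm:ASCfinite} as an immediate consequence of Proposition~\ref{prp:LinearGeneralPositionEquivalence} (general position $\Leftrightarrow$ $\I_{\X,n}=\I_{\Phi,n}$), which lets the data-derived basis stand in for a basis of $\I_{\Phi,n}$ so that Theorem~\ref{thm:ASC} applies verbatim. No gaps; your identification of the two polynomial spaces as the sole substantive step matches the paper's reasoning.
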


\section{Problem Statement and Contributions} \label{section:ProblemOverview}
In this section we begin by defining the problem of clustering unions of affine subspaces in Section \ref{subsection:AffineSubspaceClustering}. In Section \ref{subsection:traditional} we analyze the traditional algebraic approach for handling affine subspaces and point out that its correctness is far from obvious. Finally, in Section \ref{subsection:contributions} we state the main findings of this paper. 

\subsection{Affine Subspace Clustering Problem} \label{subsection:AffineSubspaceClustering}
Let $\X=\left\{\x_1,\dots,\x_N\right\}$ be a finite set of points living in a union $\Psi=\bigcup_{i=1}^n\A_i$ of $n$ affine subspaces of $\Re^D$. Each affine subspace $\A_i$ is the translation by some vector $\bmu_i \in \Re^D$ of a $d_i$-dimensional linear subspace $\cS_i$, i.e., $\A_i = \cS_i + \bmu_i$. The affine subspace clustering problem involves clustering the points $\X$ according to their subspace membership, and finding a parametrization of each affine subspace $\A_i$ by finding a translation vector $\bmu_i$ and a basis for its linear part $\cS_i$, for all $ i=1,\dots,n$. Note that there is an inherent ambiguity in determining the translation vectors $\bmu_i$, since if $\A_i = \cS_i + \bmu_i$, then $\A_i = \cS_i + (\s_i+\bmu_i)$ for any vector $\s_i \in \cS_i$. Consequently, the best we can hope for is to determine the unique component of $\bmu_i$ in the orthogonal complement $\cS_i^\perp$ of $\cS_i$.

\subsection{Traditional Algebraic Approach} \label{subsection:traditional}
Since the inception of ASC, the standard algebraic approach to cluster points living in a union of affine subspaces has been to embed the points into $\Re^{D+1}$ and subsequently apply ASC \cite{Vidal:PhD03}. 
The precise embedding $\phi_0: \Re^D \hookrightarrow \Re^{D+1}$ is given by
\begin{align}
 \boldsymbol{\alpha}=(\alpha_1,\dots,\alpha_D) \stackrel{\phi_0}{\longmapsto} \tilde{\boldsymbol{\alpha}}=
(1,\alpha_1,\dots,\alpha_D). \label{eq:phi0}
\end{align}
To understand the effect of this embedding and why it is meaningful to apply ASC to the embedded points, let $\A = \cS + \bmu$ be a $d$-dimensional affine subspace of $\Re^D$, with $\u_1,\dots,\u_d$ being a basis for its linear part $\cS$. As noted in Section \ref{subsection:AffineSubspaceClustering}, we can also assume that $\bmu \in \cS^\perp$.
For $\x \in \A$, there exists $\y \in \Re^d$ such that 
\begin{align}
\x &= \bU  \y + \bmu, \, \, \, 
\bU := \left[\u_1,\dots,\u_d\right] \in \Re^{D \times d}.
\end{align}
Then the embedded point
$\xtb:=\phi_0(\x)$ can be written as 
\begin{align}
\xtb =\begin{bmatrix}1 \\ \x \end{bmatrix}= \btU \left[ \begin{array}{c} 1 \\ \y \end{array} \right], \, \, \, 
\btU := \left[\begin{array}{cccc}  1 & 0 & \cdots & 0 \\ \bmu & \u_1 & \cdots & \u_d \end{array}\right].
\label{eq:EmbeddedPoint}
\end{align} 
Equation \eqref{eq:EmbeddedPoint} clearly indicates that the embedded point $\xtb$ lies in the linear $(d+1)$-dimensional subspace $\tilde{\cS}:=\Span(\btU)$ of $\Re^{D+1}$ and the same is true for the entire affine subspace $\A$. From \eqref{eq:EmbeddedPoint} one sees immediately that $(\u_1,\dots,\u_d,\bmu)$ can be used to construct a basis of $\tilde{\cS}$. The converse is also true: given any basis of $\tilde{\cS}$ one can recover a basis for the linear part $\cS$ and the translation vector $\bmu$ of $\A$.
Hence, the embedding $\phi_0$
takes a union of affine subspaces $\Psi = \bigcup_{i=1}^n\A_i$ into a union of linear subspaces $\tilde{\Phi}=\bigcup_{i=1}^n\tilde\cS_i$ of $\Re^{D+1}$, in a way that there is a $1-1$ correspondence between the parameters of $\A_i$ (a basis for the linear part and the translation vector) and the parameters of $\tilde{\cS}_i$ (a basis) for every $i \in [n]$.

To the best of our knowledge, the correspondence between $\A_i$ and $\tilde{\cS}_i$ has been the sole theoretical justification so far in the subspace clustering literature for the traditional Affine ASC (AASC) approach for dealing with affine subspaces, which consists of
\begin{enumerate}
	\item applying the embedding $\phi_0$ to points $\X$ in $\Psi$,
	\item computing a basis $p_1,\dots,p_s$ for the vector space  $\I_{\tilde{\X},n}$ of homogeneous polynomials of degree $n$ that vanish on the embedded points $\tilde{\X}:=\phi_0(\X)$, 
	\item for $\tilde{\x}_i \in \tilde{\X}\cap \tilde{\cS}_i \setminus \bigcup_{i \neq i'} \tilde{\cS}_i$, estimating $\tilde{\cS}_i$ via the formula 
	\begin{align} \tilde{\cS}_i = \Span(\nabla p_1|_{\tilde{\x}_i}, \dots,  \nabla p_s|_{\tilde{\x}_i})^{\perp},\label{eq:estimationSi} \end{align} 
	\item and extracting the translation vector of $\A_i$ and a basis for its linear part from a basis of $\tilde{\cS}_i$.
\end{enumerate} According to Theorem \ref{thm:ASCfinite}, the above process will succeed, if i) the embedded points $\tilde{\X}$ are in general position in $\tilde{\Phi}$ (in the sense of Definition \ref{dfn:DensePointsLinear}), and ii) the union of linear subspaces $\tilde{\Phi}$ is transversal. Note that these conditions need not be satisfied a priori because of the particular structure of both the embedded data in \eqref{eq:embeddingIntro} and the basis in \eqref{eq:EmbeddedPoint}. This gives rise to the following reasonable questions:
\begin{que}\label{que:General}
Under what conditions on $\X$ and $\Psi$, will $\tilde{\X}$ be in general position in $\tilde{\Phi}$?
\end{que}
\begin{que}\label{que:Transversal}
	Under what conditions on $\Psi$ will $\tilde{\Phi}$ be transversal?	 	
\end{que} 

\subsection{Contributions} \label{subsection:contributions} 
The main contribution of this paper is to answer
Questions \ref{que:General}-\ref{que:Transversal}. 

Regarding Question \ref{que:General}, one may be tempted to conjecture that $\tilde{\X}$ is in general position in $\tilde{\Phi}$, if the components of the points $\X$ along the union $\Phi:=\bigcup_{i=1}^n \cS_i$ of the linear parts of the affine subspaces are in general position inside $\Phi$. However, this conjecture is not true, as illustrated by the next example.

\begin{example}
	Suppose that $\Psi = \A_1 \cup \A_2$ is a union of two affine planes $\A_i = \cS_i + \bmu_i$ of $\Re^3$. Then $\Phi = \cS_1 \cup \cS_2$ is a union of $2$ planes in $\Re^3$ and 
	as argued in Example \ref{example:NonTransversal}, we can find $5$ points in general position in $\Phi$. However, $\tilde{\Phi}=\tilde{\cS}_1 \cup \tilde{\cS}_2$ is a union of $2$ hyperplanes in $\Re^4$ and any subset of $\tilde{\Phi}$ in general position must consist of at least $\mathcal{M}_2(4)-1 = {2 + 3 \choose 2}-1 = 9$ points.\footnote{Otherwise one can fit a polynomial of degree $2$ to the points, which does not vanish on $\tilde{\Phi}$.}
\end{example}

\noindent To state the precise necessary and sufficient condition for $\tilde{\X}$ to be in general position in $\tilde{\Phi}$,
we first show that $\Psi$ is the zero-set of non-homogeneous polynomials of degree $n$.

\begin{prp}\label{prp:UnionAffineZeroSet}
	Let $\Psi = \bigcup_{i=1}^n \A_i$ be a union of affine subspaces of $\Re^D$, where each affine subspace $\A_i$ is the translation of a linear subspace $\cS_i$ of codimension $c_i$ by a translation vector $\bmu_i$. For each $\A_i=\cS_i+\bmu_i$, let $\b_{i1},\dots,\b_{ic_i}$ be a basis for $\cS_i^\perp$. Then $\Psi$ is the zero set of all degree-$n$ polynomials of the form
	\begin{align}
	\!\!\!
	\prod_{i=1}^n \big(\b_{i j_i}^\transpose x \!-\! \b_{ij_i}^\transpose \bmu_i\big) : (j_1,\dots,j_n) \in [c_1]\times \cdots \times [c_n]. \!\! \label{eq:GeneratorsPsi}
	\end{align}
\end{prp}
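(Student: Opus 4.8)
The plan is to reduce the statement to the elementary fact that each affine subspace is cut out by its defining affine equations, and then to mimic the product construction used for unions of linear subspaces. First I would record the basic membership characterization: since $\A_i = \cS_i + \bmu_i$ and $\cS_i = \{\v \in \Re^D : \b_{ij}^\transpose \v = 0, \ j \in [c_i]\}$, a point $x$ lies in $\A_i$ if and only if $\b_{ij}^\transpose(x - \bmu_i) = 0$ for every $j \in [c_i]$, equivalently $\b_{ij}^\transpose x - \b_{ij}^\transpose \bmu_i = 0$ for all $j \in [c_i]$. Thus each factor appearing in \eqref{eq:GeneratorsPsi} is exactly one of the affine forms defining $\A_i$, and it vanishes at $x$ precisely when the $j$-th constraint of $\A_i$ is satisfied.

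For the inclusion of $\Psi$ into the common zero set, I would take any $x \in \Psi$, so $x \in \A_{i_0}$ for some $i_0$. For an arbitrary index tuple $(j_1,\dots,j_n)$, the factor of the product corresponding to $i = i_0$ is $\b_{i_0 j_{i_0}}^\transpose x - \b_{i_0 j_{i_0}}^\transpose \bmu_{i_0}$, which vanishes by the characterization above. Hence the entire product vanishes at $x$, and since the tuple was arbitrary, $x$ is a common zero of every polynomial in \eqref{eq:GeneratorsPsi}; thus $\Psi$ is contained in their common zero set.

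The reverse inclusion is the only part requiring an argument, and I would do it by contraposition. Suppose $x \notin \Psi$, i.e. $x \notin \A_i$ for every $i \in [n]$. By the characterization, for each $i$ there exists an index $j_i \in [c_i]$ with $\b_{ij_i}^\transpose x - \b_{ij_i}^\transpose \bmu_i \neq 0$. Selecting one such $j_i$ per subspace produces a single tuple $(j_1,\dots,j_n) \in [c_1]\times\cdots\times[c_n]$ whose associated polynomial in \eqref{eq:GeneratorsPsi} has all $n$ factors nonzero at $x$, hence is itself nonzero at $x$. Therefore $x$ is not a common zero of the family, which establishes that the common zero set is contained in $\Psi$.

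Finally I would verify the degree claim: each factor $\b_{ij_i}^\transpose x - \b_{ij_i}^\transpose \bmu_i$ is affine with nonzero linear part, since $\b_{ij_i} \neq \0$ as a basis vector, so the product of $n$ of them is a genuine degree-$n$ polynomial whose homogeneous leading part is $\prod_{i=1}^n \b_{ij_i}^\transpose x$. I do not anticipate any real obstacle: the whole argument is a finite, set-theoretic manipulation, and the only point deserving care is the per-subspace selection of a violated constraint in the reverse inclusion, which is exactly what forces us to range over the full index set $[c_1]\times\cdots\times[c_n]$ rather than over a single polynomial.
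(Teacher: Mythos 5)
Your proof is correct and follows essentially the same route as the paper's: the forward inclusion by observing that one factor of each product vanishes on the containing subspace, and the reverse inclusion by selecting, for each $i$, one violated affine constraint $j_i$ and forming the single product polynomial that is nonzero at the point. The only cosmetic differences are that you argue the reverse inclusion by contraposition where the paper phrases it as a contradiction, and you add a brief (harmless, and not strictly needed) verification that the products genuinely have degree $n$.
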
 

\noindent Thanks to Proposition \ref{prp:UnionAffineZeroSet} we can define points $\X$ to be in general position in $\Psi$, in analogy to Definition \ref{dfn:DensePointsLinear}.

\begin{dfn}\label{dfn:DensePointsAffine}
	Let $\Psi$ be a union of $n$ affine subspaces of $\Re^D$ and $\X$ a finite subset of $\Psi$. We will say that $\X$ is in general position in $\Psi$, if $\Psi$ can be recovered as the zero set of all polynomials of degree $n$ that vanish on $\X$. Equivalently, a polynomial of degree $n$ vanishes on $\Psi$ if and only if it vanishes on $\X$.
\end{dfn} 
\noindent We are now ready to answer our Question \ref{que:General}.

\begin{thm}\label{thm:MainTheoremDense}
Let $\X$ be a finite subset of a union of $n$ affine subspaces $\Psi=\bigcup_{i=1}^n \A_i$ of $\Re^D$, where $\A_i = \cS_i+\bmu_i$, with $\cS_i$ a linear subspace of $\Re^D$ of codimension $0< c_i < D$. Let $\tilde{\Phi} = \bigcup_{i=1}^n \tilde{\cS_i}$ be the union of $n$ linear subspaces of $\Re^{D+1}$ induced by the embedding $\phi_0:\Re^D \hookrightarrow \Re^{D+1}$ in \eqref{eq:phi0}. Denote by $\tilde{\X} \subset \tilde{\Phi}$ the image of $\X$ under $\phi_0$. Then $\tilde{\X}$ is in general position in $\tilde{\Phi}$ if and only if $\X$ is in general position in $\Psi$.
\end{thm}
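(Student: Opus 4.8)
The plan is to exploit the classical dehomogenization/homogenization correspondence between homogeneous polynomials of degree $n$ in the $D+1$ variables $x_0,x_1,\dots,x_D$ and (not necessarily homogeneous) polynomials of degree at most $n$ in the $D$ variables $x_1,\dots,x_D$, and to show that this correspondence carries the ideal-theoretic data on the embedded side exactly onto the affine data on the original side. Concretely, I would set up the linear map $\Delta\colon P(x_0,\dots,x_D)\mapsto P(1,x_1,\dots,x_D)$ that dehomogenizes a homogeneous degree-$n$ polynomial, together with the homogenization map $H\colon p\mapsto x_0^n\,p(x_1/x_0,\dots,x_D/x_0)$ in the other direction. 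A one-line computation gives $\Delta\circ H=\mathrm{id}$, and $\Delta$ is injective (a homogeneous $P$ with $P(1,\cdot)=0$ must vanish, by separating the homogeneous components of distinct degrees in $P(1,\cdot)$); since both spaces have dimension $\binom{n+D}{n}$, $\Delta$ is a linear isomorphism. The reason $\Delta$ is the right tool is the pointwise identity $P(\tilde{\x})=\Delta(P)(\x)$ for $\x\in\Re^D$ with image $\tilde{\x}=(1,\x)$ under $\phi_0$, so $P$ vanishes at $\tilde{\x}$ if and only if $\Delta(P)$ vanishes at $\x$.

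Next I would prove the two identifications $\Delta(\I_{\tilde{\X},n})=\I_{\X,\le n}$ and $\Delta(\I_{\tilde{\Phi},n})=\I_{\Psi,\le n}$, where $\I_{\X,\le n}$ and $\I_{\Psi,\le n}$ denote the spaces of polynomials of degree at most $n$ vanishing on $\X$ and on $\Psi$, respectively. The first is immediate from the pointwise identity above together with the bijectivity of $\Delta$. For the second, the inclusion $\Delta(\I_{\tilde{\Phi},n})\subseteq\I_{\Psi,\le n}$ is trivial, since restricting a polynomial vanishing on $\tilde{\Phi}$ to the slice $x_0=1$ makes $\Delta(P)$ vanish on $\Psi$; the reverse inclusion is the geometric heart of the argument.

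The difficulty in the reverse inclusion is that, by the parametrization in \eqref{eq:EmbeddedPoint}, the affine slice $\phi_0(\A_i)=\tilde{\cS}_i\cap\{x_0=1\}$ is only a $d_i$-dimensional piece of the $(d_i+1)$-dimensional subspace $\tilde{\cS}_i$, so vanishing on $\phi_0(\A_i)$ does not obviously force vanishing on all of $\tilde{\cS}_i$; the missing directions are the points at infinity with $x_0=0$. I would bridge this gap in two steps. First, homogeneity of $P$ gives $P(t,t\x)=t^n P(1,\x)$, so if $\Delta(P)$ vanishes on $\A_i$ then $P$ vanishes on the entire locus $\{(t,\,t\bmu_i+\s):t\ne 0,\ \s\in\cS_i\}=\tilde{\cS}_i\setminus\{x_0=0\}$. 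Second, this locus is a nonempty Zariski-open, hence dense, subset of the irreducible variety $\tilde{\cS}_i$ (nonempty because $(1,\bmu_i)\in\tilde{\cS}_i$ has $x_0=1$), so the vanishing extends to its closure $\tilde{\cS}_i$; taking the union over $i$ yields $\Delta(\I_{\tilde{\Phi},n})=\I_{\Psi,\le n}$. I expect this passage from the affine slice to the behavior at infinity to be the main obstacle, and it is exactly where the homogeneity scaling and the irreducibility/density of $\tilde{\cS}_i$ are indispensable; everything else is bookkeeping around $\Delta$.

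Finally I would assemble the equivalences. Because $\Delta$ is a bijection, $\I_{\tilde{\X},n}=\I_{\tilde{\Phi},n}$ holds if and only if $\Delta(\I_{\tilde{\X},n})=\Delta(\I_{\tilde{\Phi},n})$, i.e. if and only if $\I_{\X,\le n}=\I_{\Psi,\le n}$. By Proposition \ref{prp:LinearGeneralPositionEquivalence} applied in $\Re^{D+1}$, the left-hand equality is precisely the statement that $\tilde{\X}$ is in general position in $\tilde{\Phi}$; and the right-hand equality is precisely Definition \ref{dfn:DensePointsAffine} in its ``vanishing if and only if'' formulation, whose equivalence with the zero-set formulation follows, as in Proposition \ref{prp:LinearGeneralPositionEquivalence}, from $\Psi=\Z(\I_{\Psi,\le n})$, which in turn is guaranteed by Proposition \ref{prp:UnionAffineZeroSet}. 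Chaining these equivalences gives the desired biconditional.
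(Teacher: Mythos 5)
Your proof is correct, but it takes a genuinely different route from the paper's. The paper proves the theorem by passing through Proposition \ref{prp:AffineUnionIdealHomogenization} (that $\I_{\tilde{\Phi}}$ is the homogenization of $\I_{\Psi}$, itself obtained from the projective-closure theorem of \cite{Cox:2007} via Lemma \ref{lem:IdealEquality}), and its forward direction must split into cases according to whether the dehomogenization $P_{(d)}$ has degree $n$ or degree $n-k$ with $k\ge 1$ (i.e., whether $x_0^k$ divides $P$): in the degenerate case the paper constructs, via prime avoidance (Proposition 1.11(i) of \cite{AtiyahMacDonald-1994}), a linear form $G$ not divisible by $x_0$ and vanishing on no $\tilde{\cS}_i$, multiplies $P_{(d)}$ by $\left(G_{(d)}\right)^k$ to reach degree exactly $n$, and then strips $G^k$ back off using primality of each $\I_{\tilde{\cS}_i}$. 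Your packaging avoids all of this: because your $\Delta$ is an isomorphism from homogeneous degree-$n$ polynomials in $D+1$ variables onto polynomials of degree \emph{at most} $n$ in $D$ variables, the $x_0^k$ factor is absorbed automatically, and your single key identity $\Delta(\I_{\tilde{\Phi},n})=\I_{\Psi,\le n}$ is proved uniformly by the scaling identity $P(t,t\x)=t^nP(1,\x)$ together with Zariski density of $\tilde{\cS}_i\setminus\{x_0=0\}$ in the irreducible variety $\tilde{\cS}_i$ (irreducibility being exactly the primality of $\I_{\tilde{\cS}_i}$, Proposition \ref{prp:LinearSubspaceIdeal}). In effect you reprove, in an elementary and self-contained way, precisely the degree-$n$ slice of Proposition \ref{prp:AffineUnionIdealHomogenization} that the theorem actually needs; your density step is the affine-space avatar of Proposition \ref{prp:SingleSubspaceProjectiveClosure}, with projective space bypassed entirely. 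What your route buys is uniformity (no case split, no auxiliary form $G$, no prime-avoidance lemma) and self-containedness; what the paper's route buys is that the heavy lifting is delegated to a standard cited theorem, and the resulting full-ideal statement is reused elsewhere (e.g., in the Remark on the existence of points in general position). One gloss worth making explicit: Definition \ref{dfn:DensePointsAffine} speaks of polynomials ``of degree $n$'' while your chain reads this as ``degree at most $n$''; the two readings agree (a vanishing polynomial of degree $m<n$ times $(x_1+c)^{n-m}$ vanishes on $\X$ for every $c$, hence vanishes on $\Psi$ for every $c$, and varying $c$ recovers vanishing of the original polynomial on $\Psi$), and since the paper itself asserts its ``vanishing if and only if'' reformulation without proof, this is a presentational point rather than a gap.
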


\noindent Our second Theorem answers Question \ref{que:Transversal}.

\begin{thm}\label{thm:MainTransversal}
	Let $\Psi = \bigcup_{i=1}^n \A_i$ be a union of $n$ affine subspaces of $\Re^D$, with $\A_i = \cS_i + \bmu_i$ and  $\bmu_i = \B_i \a_i$, where $\B_i \in \Re^{D \times c_i}$ is a basis for $\cS_i^\perp$ with $c_i=\codim \cS_i$. If $\Phi = \bigcup_{i=1}^n \cS_i$ is transversal and $\a_1,\dots,\a_n$ do not lie in the zero-measure set of a proper algebraic variety\footnote{The precise description of this algebraic variety is given in the proof of the Theorem in Section \ref{subsection:ProofTransversal}.} of $\Re^{c_1} \times \cdots \times \Re^{c_n}$, then $\tilde{\Phi}$ is transversal.
\end{thm}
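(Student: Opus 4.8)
The plan is to reduce transversality of $\tilde\Phi$ to a rank condition on explicit matrices, exactly as the excerpt does for $\Phi$ in the discussion following Definition \ref{dfn:transversal}, and then to isolate the single case in which that condition can fail and show that it fails only on a proper subvariety of the $\a_i$'s. First I would compute a basis for $\tilde\cS_i^\perp$. Using the basis $\btU_i$ of $\tilde\cS_i$ from \eqref{eq:EmbeddedPoint}, a vector $(t,\v)\in\Re\times\Re^D$ is orthogonal to $\tilde\cS_i$ if and only if $\v\in\cS_i^\perp$ and $t=-\v^\transpose\bmu_i$. Writing $\v=\B_i\w$ and using $\bmu_i=\B_i\a_i$ gives $t=-\w^\transpose\B_i^\transpose\B_i\a_i$, so that
\begin{align}
\tilde\B_i=\begin{bmatrix} -(\B_i^\transpose\B_i\,\a_i)^\transpose \\ \B_i \end{bmatrix}\in\Re^{(D+1)\times c_i}
\end{align}
is a basis for $\tilde\cS_i^\perp$, and in particular $\codim\tilde\cS_i=c_i$. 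Consequently, for any $\Jf\subseteq[n]$ the matrix $\tilde\B_\Jf:=[\tilde\B_i:i\in\Jf]$ is exactly $\B_\Jf$ with one extra row $\rho_\Jf$ prepended, where $\rho_\Jf$ is the concatenation over $i\in\Jf$ of the row vectors $-(\B_i^\transpose\B_i\,\a_i)^\transpose$.

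Repeating the argument after Definition \ref{dfn:transversal} verbatim in $\Re^{D+1}$, transversality of $\tilde\Phi$ is equivalent to $\rank(\tilde\B_\Jf)=\min\{D+1,\,c_\Jf\}$ for every $\Jf\subseteq[n]$, where $c_\Jf:=\sum_{i\in\Jf}c_i$. I would then split on the size of $c_\Jf$. If $c_\Jf\le D$, transversality of $\Phi$ gives $\rank(\B_\Jf)=c_\Jf$, i.e.\ $\B_\Jf$ has full column rank; since $\tilde\B_\Jf$ has the same $c_\Jf$ columns, its rank is also $c_\Jf=\min\{D+1,c_\Jf\}$, and the condition holds with no constraint on the $\a_i$. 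The only nontrivial case is $c_\Jf>D$: here transversality of $\Phi$ gives $\rank(\B_\Jf)=D$, so $\B_\Jf$ has full row rank, and $\rank(\tilde\B_\Jf)$ equals $D+1$ precisely when the prepended row $\rho_\Jf$ does \emph{not} lie in the row space of $\B_\Jf$. This is the crux: transversality of $\tilde\Phi$ fails exactly when $\rho_\Jf\in\operatorname{rowspace}(\B_\Jf)$ for some $\Jf$ with $c_\Jf>D$.

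It remains to package the failure locus as a proper variety. I would set $\Sigma=\bigcup_{\Jf:\,c_\Jf>D}\{(\a_1,\dots,\a_n): \text{all }(D{+}1)\times(D{+}1)\text{ minors of }\tilde\B_\Jf\text{ vanish}\}$; expanding each such minor along the top row shows it is an (affine-linear, hence polynomial) function of the $\a_i$, so each set in the union is an algebraic variety of $\Re^{c_1}\times\cdots\times\Re^{c_n}$. To see each is \emph{proper}, note that $c_\Jf>D=\dim\operatorname{rowspace}(\B_\Jf)$ forces the row space to be a proper subspace of $\Re^{c_\Jf}$, while invertibility of each $\B_i^\transpose\B_i$ makes the map $(\a_i)_{i\in\Jf}\mapsto\rho_\Jf$ onto $\Re^{c_\Jf}$; hence some choice of $\a_i$ places $\rho_\Jf$ outside that subspace, so the variety is not all of $\a$-space. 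A finite union of proper algebraic varieties is again a proper algebraic variety, hence of measure zero, exactly as in Proposition \ref{prp:TransversalGeneral}. For $\a$ outside $\Sigma$, the case analysis gives $\rank(\tilde\B_\Jf)=\min\{D+1,c_\Jf\}$ for all $\Jf$, so $\tilde\Phi$ is transversal; this $\Sigma$ is the variety promised in the footnote.

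The part I expect to require the most care is the properness claim: one must handle every $\Jf$ with $c_\Jf>D$ simultaneously, and verify that transversality of $\Phi$ — which forces $\rank\B_\Jf=D$ — is precisely what makes each defining condition a genuine, nonvacuous constraint on the $\a_i$. Without this input $\Sigma$ could in principle be the entire space and the theorem would be vacuous, so the surjectivity-of-$\rho_\Jf$ argument combined with the transversality hypothesis on $\Phi$ is the load-bearing step.
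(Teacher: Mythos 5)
Your proof is correct and follows essentially the same route as the paper's: reduce transversality of $\tilde{\Phi}$ to the rank conditions $\rank(\tilde{\B}_{\Jf})=\min\{D+1,c_{\Jf}\}$ over all $\Jf \subseteq [n]$, observe that the case $c_{\Jf}\le D$ holds automatically by transversality of $\Phi$, and describe the failure locus for $c_{\Jf}>D$ by the vanishing of the $(D+1)\times(D+1)$ minors of $\tilde{\B}_{\Jf}$, which are linear in the $\a_i$. The only (immaterial) difference is how properness of each $\W_{\Jf}$ is certified: the paper notes via Laplace expansion that some minor is a nonzero polynomial because its coefficients include a nonzero $D\times D$ minor of $\B_{\Jf}$, whereas you exhibit a point outside the variety using surjectivity of $(\a_i)_{i\in\Jf}\mapsto\rho_{\Jf}$ (invertibility of the Gram matrices $\B_i^\transpose\B_i$) together with $\dim\operatorname{rowspace}(\B_{\Jf})=D<c_{\Jf}$ --- both arguments resting on the same input, $\rank(\B_{\Jf})=D$; your explicit restriction of the union to those $\Jf$ with $c_{\Jf}>D$ is in fact slightly cleaner than the paper's statement.
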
 

\noindent One may wonder if some of the $\bmu_i$ can be zero and $\tilde{\Phi}$ still be transversal. This depends on the $c_i$ as the next example shows.

\begin{example}
Let $\A_1 = \Span(\b_{11},\b_{12})^\perp + \bmu_1$ be an affine line and $\A_2 = \Span(\b_{2})^\perp + \bmu_2$ an affine plane of $\Re^3$. Suppose that $\Phi=\Span(\b_{11},\b_{12})^\perp \cup \Span(\b_{2})^\perp$ is transversal. Then $\tilde{\Phi}=\tilde{\cS}_1 \cup \tilde{\cS}_2$ is transversal if and only if the matrix
\begin{align}
\tilde{\B} := \begin{bmatrix}
-\b_{11}^\transpose \bmu_1 & -\b_{12}^\transpose \bmu_1 & -\b_{2}^\transpose \bmu_2 \\
\b_{11} & \b_{12} & \b_{2} 
\end{bmatrix} \in \Re^{4 \times 3}
\end{align} has rank $3$. But $\rank\left(\tilde{\B}\right)=3$, irrespectively of what the $\bmu_i$ are, simply because the $3 \times 3$ matrix $\B := [\b_{11} \, \,  \b_{12} \, \, \b_{2}]$ is full rank (by the transversality assumption on $\Phi$). Now let us replace the affine plane $\A_2$ with a second affine line $\A_2 = \Span(\b_{21},\b_{22})^\perp + \bmu_2$. Then $\tilde{\Phi}$ is transversal if and only if 
\small
\begin{align}
\tilde{\B}: = \begin{bmatrix}
-\b_{11}^\transpose \bmu_1 & -\b_{12}^\transpose \bmu_1 & -\b_{21}^\transpose \bmu_2 & -\b_{22}^\transpose \bmu_2 \\
\b_{11} & \b_{12} & \b_{21} & \b_{22}
\end{bmatrix} \in \Re^{4 \times 4}
\end{align} \normalsize has rank $4$, which is impossible if both $\bmu_1,\bmu_2$ are zero.
\end{example}

\noindent As a corollary of Theorems \ref{thm:ASCfinite}, \ref{thm:MainTheoremDense} and \ref{thm:MainTransversal}, we get the correctness Theorem of ASC for the case of affine subspaces.

\begin{thm} \label{thm:MainCorrectness}
	Let $\Psi = \bigcup_{i=1}^n \A_i$ be a union of affine subspaces of $\Re^D$, with $\A_i = \cS_i + \bmu_i$ and  $\bmu_i = \B_i \a_i$, where $\B_i \in \Re^{D \times c_i}$ is a basis for $\cS_i^\perp$ with $c_i=\codim \cS_i$. Let $\tilde{\Phi} = \bigcup_{i=1}^n \tilde{\cS_i}$ be the union of $n$ linear subspaces of $\Re^{D+1}$ induced by the embedding $\phi_0:\Re^D \hookrightarrow \Re^{D+1}$ of \eqref{eq:phi0}. Let $\X$ be a finite subset of $\Psi$ and denote by $\tilde{\X} \subset \tilde{\Phi}$ the image of $\X$ under $\phi_0$. Let $p_1,\dots,p_s$ be a basis for $\I_{\tilde{\X},n}$, the vector space of homogeneous polynomials of degree $n$ that vanish on $\tilde{\X}$. Let $\x \in \X \cap \A_1 \setminus \bigcup_{i>1} \A_i$, and denote $\tilde{\x} = \phi_0(\x)$.
	 Define
	\begin{align}
	\bt_k &:= \nabla p_k|_{\tilde{\x}_1} \in \Re^{D+1}, \, \, \, k=1,\dots,s,
	\end{align} and without loss of generality, let $\bt_1,\dots,\bt_{\ell}$ be 
	a maximal linearly independent subset of $\bt_1,\dots,\bt_s$.
	Define further $(\gamma_{k}, \b_{k}) \in \Re \times \Re^D$ and 
	$(\g_1,\B_1) \in \Re^\ell \times \Re^{D \times \ell}$ as
	\begin{align}
	\bt_k &=: \left[ \begin{array}{c} \gamma_k \\ \b_{k} \end{array} \right], \, k=1,\dots, \ell\\
	\g_1 & := \left[\gamma_1,\dots,\gamma_{\ell} \right]^\transpose, \, \, \, 
	\B_1  := \left[\b_1, \dots, \b_{\ell} \right].
	\end{align} 
	If $\X$ is in general position in $\Psi$, $\Phi = \bigcup_{i=1}^n \cS_i$ is transversal, and $\a_1,\dots,\a_n$ do not lie in the zero-measure set of a proper algebraic variety of $\Re^{c_1} \times \cdots \times \Re^{c_n}$, then
	\begin{align}
	\A_1 = \Span(\B_1)^\perp - \B_1 \left( \B_1^\transpose \B_1 \right)^{-1} \g_1. \label{eq:A1representation}
	\end{align}
\end{thm}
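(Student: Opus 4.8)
The plan is to reduce the affine problem entirely to the linear theory already established and then carry out the final coordinate bookkeeping. First I would invoke Theorem~\ref{thm:MainTheoremDense} to conclude that, since $\X$ is in general position in $\Psi$, the embedded set $\tilde{\X}$ is in general position in $\tilde{\Phi}$, and Theorem~\ref{thm:MainTransversal} to conclude that, since $\Phi$ is transversal and $\a_1,\dots,\a_n$ avoid the prescribed proper variety, $\tilde{\Phi}$ is transversal. Since $\x \in \X \cap \A_1 \setminus \bigcup_{i>1}\A_i$ and $\phi_0(\x)$ lies in $\tilde{\cS}_i$ exactly when $\x \in \A_i$, the point $\tilde{\x}$ belongs to $\tilde{\X}\cap\tilde{\cS}_1\setminus\bigcup_{i>1}\tilde{\cS}_i$. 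With both hypotheses of Theorem~\ref{thm:ASCfinite} verified for the linear union $\tilde{\Phi}$, that theorem gives $\tilde{\cS}_1=\Span(\bt_1,\dots,\bt_s)^\perp$, which, because $\bt_1,\dots,\bt_\ell$ is a maximal linearly independent subset of the $\bt_k$, I would rewrite as $\tilde{\cS}_1^\perp=\Span(\bt_1,\dots,\bt_\ell)$, i.e. the span of the columns of the matrix whose first row is $\g_1^\transpose$ and whose remaining rows form $\B_1$.

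The crux is then an explicit description of $\tilde{\cS}_1^\perp$ obtained from the concrete basis of $\tilde{\cS}_1$ supplied by \eqref{eq:EmbeddedPoint}. Writing $\btU_1$ for the $(D+1)\times(d_1+1)$ matrix whose first column has top entry $1$ and bottom block $\bmu_1$, and whose remaining columns have top entry $0$ and bottom blocks $\u_1,\dots,\u_{d_1}$ forming a basis of $\cS_1$, a vector $\tilde{\v}\in\Re^{D+1}$ with top entry $\gamma$ and bottom block $\v$ lies in $\tilde{\cS}_1^\perp$ exactly when $\tilde{\v}^\transpose\btU_1=\0$. Expanding this yields the two conditions $\v^\transpose\u_j=0$ for all $j$, i.e. $\v\in\cS_1^\perp$, and $\gamma+\v^\transpose\bmu_1=0$, i.e. $\gamma=-\bmu_1^\transpose\v$. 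Hence the coordinate map $\tilde{\v}\mapsto\v$ is injective on $\tilde{\cS}_1^\perp$ with image $\cS_1^\perp$.

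Applying this to each column $\bt_k$ of the gradient matrix, I read off that $\b_k\in\cS_1^\perp$ and $\gamma_k=-\bmu_1^\transpose\b_k$, so stacking gives $\g_1=-\B_1^\transpose\bmu_1$. By the injectivity just noted, the independence of $\bt_1,\dots,\bt_\ell$ forces $\b_1,\dots,\b_\ell$ to be linearly independent; and since $\dim\tilde{\cS}_1^\perp=(D+1)-(d_1+1)=c_1$, there are exactly $\ell=c_1$ of them, so they form a basis of $\cS_1^\perp$. Therefore $\Span(\B_1)=\cS_1^\perp$, equivalently $\Span(\B_1)^\perp=\cS_1$.

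Finally I would substitute $\g_1=-\B_1^\transpose\bmu_1$ into the right-hand side of \eqref{eq:A1representation}:
\begin{align}
-\B_1(\B_1^\transpose\B_1)^{-1}\g_1 = \B_1(\B_1^\transpose\B_1)^{-1}\B_1^\transpose\bmu_1.
\end{align}
The matrix $\B_1(\B_1^\transpose\B_1)^{-1}\B_1^\transpose$ is the orthogonal projector onto $\Span(\B_1)=\cS_1^\perp$, and since $\bmu_1=\B_1\a_1\in\cS_1^\perp$ it fixes $\bmu_1$; thus $-\B_1(\B_1^\transpose\B_1)^{-1}\g_1=\bmu_1$. Combining with $\Span(\B_1)^\perp=\cS_1$ gives $\Span(\B_1)^\perp-\B_1(\B_1^\transpose\B_1)^{-1}\g_1=\cS_1+\bmu_1=\A_1$, as claimed. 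The substantive content is front-loaded into Theorems~\ref{thm:MainTheoremDense} and \ref{thm:MainTransversal}; once those secure the hypotheses of Theorem~\ref{thm:ASCfinite}, the remainder is linear algebra. The only step demanding genuine care is the explicit identification of $\tilde{\cS}_1^\perp$ and the ensuing relation $\g_1=-\B_1^\transpose\bmu_1$, which is precisely what lets the projector recover the orthogonal-complement component $\bmu_1$ of the translation, the unique recoverable part as noted in Section~\ref{subsection:AffineSubspaceClustering}.
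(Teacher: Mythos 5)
Your proof is correct and takes exactly the route the paper intends: the paper presents Theorem~\ref{thm:MainCorrectness} as a corollary of Theorems~\ref{thm:ASCfinite}, \ref{thm:MainTheoremDense} and \ref{thm:MainTransversal}, and omits the remaining verification as straightforward. Your explicit identification of $\tilde{\cS}_1^\perp$, the resulting relation $\g_1 = -\B_1^\transpose \bmu_1$ with $\ell = c_1$, and the projector computation recovering $\bmu_1$ correctly supply precisely the linear-algebra details the paper leaves out.
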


\begin{remark} The acute reader may notice that we still need to answer the question of whether $\Psi$ admits a finite subset $\X$ in general position, to begin with. This answer is affirmative: If $\Psi$ satisfies the hypothesis of Theorem \ref{thm:MainTransversal}, then $\tilde{\Phi}$ will be transversal, and so by Proposition \ref{prp:AffineUnionIdealHomogenization} $\I_{\Psi}$ is generated in degree $\le n$, in which case the existence of $\X$ follows from Theorem 2.9 in \cite{Ma:SIAM08}.
\end{remark}

The rest of the paper is organized as follows: in Section \ref{section:AlgebraicGeometryAffine} we establish the fundamental algebraic-geometric properties of a union of affine subspaces. Then using these tools, we prove in Section \ref{section:Proof} Theorems \ref{thm:MainTheoremDense} and \ref{thm:MainTransversal}. The proof of Theorem \ref{thm:MainCorrectness} is straightforward is thus omitted.

\section{Algebraic Geometry of Unions of Affine Subspaces} \label{section:AlgebraicGeometryAffine}
In Section \ref{subsection:AffineSubspaceVariety} we describe the basic algebraic geometry of affine subspaces and unions thereof, in analogy to the case of linear subspaces. 
In particular, we show that a single affine subspace is the zero-set of polynomial equations of degree $1$, and a union $\Psi$ of affine subspaces is the zero-set of polynomial equations of degree $n$. 
In Section \ref{subsection:ProjectiveClosure} we study more closely the embedding $ \A \stackrel{\phi_0}{\longrightarrow} \tilde{\cS}$ of an affine subspace $\A \subset \Re^D$ into its associated linear subspace $\tilde{\cS} \subset \Re^{D+1}$ (see Section \ref{subsection:traditional}), which will lead to a deeper understanding of the embedding $ \Psi \stackrel{\phi_0}{\longrightarrow} \tilde{\Phi}$ of a union of affine subspaces $\Psi \subset \Re^D$ into its associated union of linear subspaces $\tilde{\Phi} \subset \Re^{D+1}$. As we will see, $\phi_0(\Psi)$ is \emph{dense} in $\tilde{\Phi}$ in a very precise sense, and the algebraic manifestation of this relation (Proposition \ref{prp:AffineUnionIdealHomogenization}) will be used later in 
 Section \ref{subsection:ProofDense}, to prove our Theorem \ref{thm:MainTheoremDense}.

\subsection{Affine Subspaces as Affine Varieties} \label{subsection:AffineSubspaceVariety}

Let $\A = \cS + \bmu$ be an affine subspace of $\Re^D$ and let
$\b_1,\dots,\b_c$ be a basis for the orthogonal complement $\cS^\perp$ of $\cS$. 
The first important observation is that a vector $\x$ belongs to $\cS$ if and only if
$ \x \perp \b_k, \, \, \, \forall k =1,\dots,c$. In the language of algebraic geometry
this is the same as saying that $\cS$ is the zero set of $c$ linear polynomials:
\begin{align}
\cS = \Z\left(\b_1^\transpose x, \dots, \b_c^\transpose x\right), \, \, \, x := [x_1,\dots,x_D]^\transpose. \label{eq:Svariety}
\end{align} 
\begin{dfn} \label{dfn:VanishingIdeal}
	Let $\Y$ be a subset of $\Re^{D}$. The set $\I_{\Y}$ of polynomials $p(x_1,\dots,x_{D})$ that vanish on $\Y$, i.e., $p(y_1,\dots,y_{D})=0, \, \forall [y_1,\dots,y_{D}]^\transpose \in \Y$, is called the vanishing ideal of $\Y$.
\end{dfn} \noindent One may wonder if the linear polynomials $\b_i^\transpose x, \, i=1,\dots,c$, form some sort of \emph{basis} for the vanishing ideal $\I_{\cS}$ of $\cS$. In fact this is true (see the appendix in \cite{Tsakiris:SIAM17} for a proof) and can be formalized by saying that these linear polynomials are \emph{generators} of $\I_{\cS}$ over the polynomial ring $\R=\Re[x_1,\dots,x_D]$. This means that every polynomial that belongs to $\I_\cS$
can be written as a linear combination of $\b_1^\transpose x, \dots, \b_c^\transpose x$
with polynomial coefficients, i.e., 
\begin{align}
p(x) = p_1(x) (\b_1^\transpose x) + \cdots + p_c(x) (\b_c^\transpose x) \label{eq:IdealGenerated}
\end{align} where $p_1,\dots,p_c$ are some polynomials in $\R$.
More compactly 
\begin{align}
\I_{\cS} = (\b_1^\transpose x, \dots, \b_c^\transpose x), \label{eq:LinearSubspaceVanishingIdeal}
\end{align}
which reads as $\I_{\cS}$ is \emph{the ideal generated by} the polynomials $\b_1^\transpose x, \dots, \b_c^\transpose x$ as in \eqref{eq:IdealGenerated}. The following important fact\footnote{For a proof see Appendix C in \cite{Tsakiris:SIAM17}.} will be used in Section \ref{subsection:ProofDense} to prove our Theorem \ref{thm:MainTheoremDense}.
\begin{prp}\label{prp:LinearSubspaceIdeal}
	The vanishing ideal $\I_{\cS}$ of a linear subspace $\cS$ is always a prime ideal, i.e., if $p,q$ are polynomials such that 
	$p q \in \I_{\cS}$, then either $p \in \I_{\cS}$ or $q \in \I_{\cS}$.
\end{prp}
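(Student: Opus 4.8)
The plan is to prove that $\I_{\cS}$ is prime by establishing the equivalent statement that the quotient ring $\R/\I_{\cS}$ is an integral domain. The leverage comes from equation \eqref{eq:LinearSubspaceVanishingIdeal}, which identifies $\I_{\cS} = (\b_1^\transpose x, \dots, \b_c^\transpose x)$ as the ideal generated by $c$ linearly independent linear forms. Since primeness is preserved under ring isomorphisms, I would reduce to the simplest possible presentation of this ideal by a linear change of coordinates, where the claim becomes transparent.

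First I would extend the basis $\b_1,\dots,\b_c$ of $\cS^\perp$ to a basis $\b_1,\dots,\b_D$ of $\Re^D$ and introduce new indeterminates $y_k := \b_k^\transpose x$ for $k=1,\dots,D$. Because the matrix $[\b_1,\dots,\b_D]$ is invertible, the linear forms $\{\b_k^\transpose x\}_{k=1}^D$ form a basis of the degree-one part $\R_1$ of $\R$, so the assignment $x_j \mapsto y_j$ extends to a graded ring isomorphism $\Re[x_1,\dots,x_D] \xrightarrow{\sim} \Re[y_1,\dots,y_D]$. Under this isomorphism $\I_{\cS}$ is carried exactly onto the coordinate ideal $(y_1,\dots,y_c)$, since the generators $\b_1^\transpose x,\dots,\b_c^\transpose x$ map to $y_1,\dots,y_c$.

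Second I would identify the quotient explicitly: $\Re[y_1,\dots,y_D]/(y_1,\dots,y_c) \cong \Re[y_{c+1},\dots,y_D]$, a polynomial ring in $D-c = \dim\cS$ variables over the field $\Re$. Such a ring is an integral domain, and since $\R/\I_{\cS}$ is isomorphic to it, $\R/\I_{\cS}$ is a domain as well; hence $\I_{\cS}$ is prime.

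There is no serious obstacle in this argument; the two points that merit care are verifying that the coordinate change genuinely induces a ring isomorphism sending $\I_{\cS}$ onto $(y_1,\dots,y_c)$ (which rests on $\{\b_k^\transpose x\}$ being a linear basis of $\R_1$), and observing that the integral-domain property, and therefore primeness, transfers across the isomorphism. As an alternative to the quotient-ring route, one could argue directly: restricting $p$ and $q$ to a linear parametrization $\cS \cong \Re^{\dim\cS}$ turns $pq \in \I_{\cS}$ into a product of polynomial functions vanishing identically on $\Re^{\dim\cS}$, and since $\Re$ is infinite the ring of such functions is the integral domain $\Re[y_{c+1},\dots,y_D]$, forcing $p$ or $q$ to restrict to zero, i.e. to lie in $\I_{\cS}$.
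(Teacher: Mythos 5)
Your proof is correct, but there is nothing in the paper to compare it against: the paper does not prove Proposition \ref{prp:LinearSubspaceIdeal} at all, it simply cites Appendix C of \cite{Tsakiris:SIAM17} in a footnote. What you have written is therefore a self-contained replacement for an omitted proof, and both of your routes work.

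A few remarks on the two routes. In the first (coordinate change plus quotient ring), your phrase ``the assignment $x_j \mapsto y_j$'' is a slip: the isomorphism you want is the linear substitution determined by $\b_k^\transpose x \mapsto y_k$ for $k=1,\dots,D$, equivalently $x \mapsto (\B^\transpose)^{-1}y$ with $\B := [\b_1,\dots,\b_D]$; your next sentence (``the generators $\b_1^\transpose x,\dots,\b_c^\transpose x$ map to $y_1,\dots,y_c$'') shows this is what you intend, so the argument stands: $\I_{\cS}$ is carried onto $(y_1,\dots,y_c)$, the quotient is the domain $\Re[y_{c+1},\dots,y_D]$, hence $\I_{\cS}$ is prime. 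Note, however, that this route leans on \eqref{eq:LinearSubspaceVanishingIdeal}, i.e.\ on $\I_{\cS}$ being \emph{generated} by the linear forms $\b_1^\transpose x,\dots,\b_c^\transpose x$ --- a fact the paper also only cites to \cite{Tsakiris:SIAM17} rather than proves --- so strictly speaking you have traded one external citation for another. Your second, alternative argument avoids this entirely and is the preferable one: parametrizing $\cS$ as $\{\bU\y : \y \in \Re^{d}\}$ gives a ring homomorphism $\R \to \Re[y_1,\dots,y_{d}]$, $p \mapsto p(\bU\y)$, whose kernel is exactly $\I_{\cS}$ (this identification of vanishing-as-a-function with vanishing-as-a-polynomial is where the infiniteness of $\Re$ enters, and is worth stating explicitly); since the kernel of a homomorphism into an integral domain is prime, the conclusion follows directly from the definition of $\I_{\cS}$, with no appeal to any generation statement.
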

\noindent Moving on, the second important observation is that $\x \in \A$ if and only if $\x - \bmu \in \cS$.
Equivalently, 
\begin{align}
\x \in \A \, \, \, \Leftrightarrow \, \, \, \b_k \perp \x - \bmu, \, \, \, \forall k=1,\dots,c
\end{align} or in algebraic geometric terms
\begin{align}
\A = \Z \left(\b_1^\transpose x - \b_1^\transpose \bmu,\dots,\b_c^\transpose x - \b_c^\transpose \bmu\right). \label{eq:DIaffine}
\end{align} In other words, the affine subspace $\A$ is an algebraic variety of $\Re^D$. In fact, we say that $\A$ is an \emph{affine variety}, since it is defined by non-homogeneous polynomials.
To describe the \emph{vanishing ideal} $\I_{\A}$ of $\A$, note that 
a polynomial $p(x)$ vanishes on $\A$ if and only if $p(x+\bmu)$ vanishes on
$\cS$. This, together with \eqref{eq:LinearSubspaceVanishingIdeal}, give
\begin{align}
\I_{\A} =  \left(\b_1^\transpose x - \b_1^\transpose \bmu,\dots,\b_c^\transpose x - \b_c^\transpose \bmu\right).
\end{align} 

Next, we consider a union $\Psi =\bigcup_{i=1}^n \A_i$ of affine subspaces
$\A_i = \cS_i + \bmu_i, \, i\in [n]$, of $\Re^D$. We will prove Proposition \ref{prp:UnionAffineZeroSet}, which describes $\Psi$ as the zero-set of non-homogeneous polynomials of degree $n$, showing that $\Psi$ is an affine variety of $\Re^D$.

\begin{proof}(Proposition \ref{prp:UnionAffineZeroSet})
	Denote the set of all polynomials of the form \eqref{eq:GeneratorsPsi} by $\mathcal{P}$. First, we show that $\Psi \subset \Z(\mathcal{P})$. Take $\x \in \Psi$; we will show that $\x \in \Z(\mathcal{P})$. Since $\Psi = \A_1\cup\cdots \cup \A_n$, $\x$ belongs to at least one of the affine subspaces, say $\x \in \A_i$, for some $i$. For every polynomial $p$ of $\mathcal{P}$, there is a linear factor $\b_{i j_i}^\transpose x- \b_{ij_i}^\transpose \bmu_i$ of $p$ that vanishes on $\A_i$ and thus on $\x$. Hence $p$ itself will vanish on $\x$. Since $p$ was an arbitrary element of $\mathcal{P}$, this shows that every polynomial of $\mathcal{P}$ vanishes on $\x$, i.e., $\x \in \Z(\mathcal{P})$.
	
	Next, we show that $\Z(\mathcal{P}) \subset \Psi$. 
	Let $\x \in \Z(\mathcal{P})$; we will show that $\x \in \Psi$. If 
	$\x$ is a root of all polynomials $p_{1j}(x) = \b_{1 j}^\transpose x - \b_{1j}^\transpose \bmu_1$,
	then $\x \in \A_1$ and we are done. Otherwise, one of these linear polynomials does not vanish on $\x$, say $p_{1 1}(\x) \neq 0$.
	Now suppose that $\x \not\in \Psi$. By the above argument, for every affine subspace $\A_i$ there must exist some linear polynomial 
	$\b_{i1}^\transpose x - \b_{i1}^\transpose \bmu_i$, which does not vanish on $\x$. As consequence, the polynomial	
	\begin{align}
	p(x)=\prod_{i=1}^n \big(\b_{i 1}^\transpose x - \b_{i 1}^\transpose \bmu_i\big) 	
	\end{align} does not vanish on $\x$, i.e., $p(\x) \neq 0$. But because of the definition of $\mathcal{P}$, we must have that $p \in \mathcal{P}$. Since $\x$ was selected to be an element of
	$\Z(\mathcal{P})$, we must have that $p(\x)=0$, which is a contradiction, as we just saw that $p(\x)\neq 0$. Consequently, the hypothesis that
	$\x \not\in \Psi$, must be false, i.e., $\Z(\mathcal{P}) \subset \Psi$.
\end{proof} 

The reader may wonder what the vanishing ideal $\I_{\Psi}$ of $\Psi$ is and what is its relation to the linear polynomials whose products generate $\Psi$, as in Proposition \ref{prp:UnionAffineZeroSet}. In fact, this question is still partially open even in the simpler case of a union of linear subspaces \cite{Conca:CM03,Sidman:AIM02,Derksen:JPAA07}. As it turns out, $\I_{\Psi}$ is intimately related to $\I_{\tilde{\Phi}}$, where $\tilde{\Phi}=\bigcup_{i=1}^n\tilde{\cS}_i$ is the union of linear subspaces associated to $\Psi$ under the embedding $\phi_0$ of \eqref{eq:phi0}. It is precisely this relation that will enable us to prove Theorem \ref{thm:MainTheoremDense}, and to elucidate it we need the notion of \emph{projective closure} that we introduce next.\footnote{Of course, the notion of \emph{projective} closure is a well-known concept in algebraic geometry; here we introduce it in a self-contained fashion in the context of unions of affine subspaces, dispensing with unnecessary abstractions.}

\subsection{The Projective Closure of Affine Subspaces} \label{subsection:ProjectiveClosure}
Let $\phi_0(\A)$ be the image of $\A = \cS + \bmu$ under the embedding $\phi_0: \Re^D \hookrightarrow \Re^{D+1}$ in \eqref{eq:phi0}. Let $\tilde{\cS}$ be the $(d+1)$-dimensional linear subspace 
of $\Re^{D+1}$ spanned by the columns of $\btU$ (see \eqref{eq:EmbeddedPoint}).
A basis for the orthogonal complement of $\tilde{\cS}$ in $\Re^{D+1}$ is 
\begin{align}
\bt_1:=\left[ \begin{array}{c} -\b_1^\transpose \bmu \\ \b_1 \end{array}\right], \dots,
\bt_c:=\left[\begin{array}{c} -\b_c^\transpose \bmu \\ \b_c \end{array}\right],
\end{align} since $\codim (\tilde{\cS}_i) = \codim (\cS)$, and the $\tilde{\b}_i$ are linearly independent because the $\b_i$ are.
In algebraic geometric terms
\begin{align}
\label{eq:PCsimilarity}
\begin{split}
\tilde{\cS} &= \Z \left(\b_1^\transpose x - (\b_1^\transpose \bmu) x_0,\dots,\b_c^\transpose x - (\b_c^\transpose\bmu) x_0\right)\\
&=\Z \left(\bt_1^\transpose \xt,\dots,\bt_c^\transpose \xt \right), \, \xt:=[x_0,x_1,\dots,x_D]^\transpose.
\end{split}
\end{align} 
By inspecting equations \eqref{eq:DIaffine} and \eqref{eq:PCsimilarity}, we see that every point of $\phi_0(\A)$ satisfies the equations \eqref{eq:PCsimilarity} of $\tilde{\cS}$. Since these equations are homogeneous, it will in fact be true that for any point $\tilde{\x} \in \phi_0(\A)$ the entire line of $\Re^{D+1}$ spanned by $\tilde{\x}$ will still lie in $\tilde{\cS}$. Hence, we may as well think of the embedding $\phi_0$ as mapping a point $\x \in \Re^D$ to a line of $\Re^{D+1}$. To formalize this concept, we need the notion of \emph{projective space} \cite{Cox:2007,Hartshorne-1977}:

\begin{dfn}\label{dfn:ProjectiveSpace}
The real projective space $\Pr^D$ is defined to be the set of all lines through the origin in $\Re^{D+1}$. Each non-zero vector  $\boldsymbol{\alpha}$ of $\Re^{D+1}$ defines an element $[\boldsymbol{\alpha}]$ of $\Pr^D$, and two elements $[\boldsymbol{\alpha}], [\boldsymbol{\beta}]$ of $\Pr^D$ are equal in $\Pr^D$, if and only if there exists a nonzero $\lambda \in \Re$ such that we have an equality $\boldsymbol{\alpha}= \lambda \boldsymbol{\beta}$ of vectors in $\Re^{D+1}$. For each point $[\boldsymbol{\alpha}] \in \Pr^D$, we call the point $\boldsymbol{\alpha} \in \Re^{D+1}$ a representative of $[\boldsymbol{\alpha}]$.
\end{dfn} 

Now we can define a new embedding $\hat{\phi}_0 : \Re^D \rightarrow \Pr^D$, that behaves exactly as $\phi_0$ in \eqref{eq:phi0}, except that it now takes points of $\Re^D$ to lines of $\Re^{D+1}$, or more precisely, to elements of $\Pr^D$:
\begin{align}
(\alpha_1,\alpha_2,\dots,\alpha_D) \stackrel{\hat{\phi}_0}{\longmapsto} [(1,\alpha_1,\alpha_2,\cdots,\alpha_D)].
\label{eq:projective_phi0}
\end{align} 
A point $\x$ of $\A$ is mapped by $\hat{\phi}_0$ to a line inside $\tilde{\cS}$, or more specifically, to the point $[\tilde{\x}]$ of $\Pr^D$, whose representative $\tilde{\x}$ satisfies the equations \eqref{eq:PCsimilarity} of $\tilde{\cS}$. The set of all lines of $\Re^{D+1}$ that live in $\tilde{\cS}$, viewed as elements of $\Pr^D$, is denoted by $[\tilde{\cS}]$, i.e., 
\begin{align}
[\tilde{\cS}] = \left\{[\boldsymbol{\alpha}] \in \Pr^D: \, \boldsymbol{\alpha} \in \tilde{\cS} \right\}. 
\end{align} \noindent The representative $\boldsymbol{\alpha}$ of every element $[\boldsymbol{\alpha}] \in [\tilde{\cS}]$ satisfies by definition the equations \eqref{eq:PCsimilarity} of $\tilde{\cS}$, and so $[\tilde{\cS}]$ has naturally the structure of an algebraic variety of $\Pr^D$, which is called a projective variety. We emphasize that even though the varieties $\tilde{\cS}$ and $[\tilde{\cS}]$ live in different spaces, $\Re^{D+1}$ and $\Pr^D$ respectively, they are defined by the same equations. In fact, every algebraic variety $\Y$ of $\Re^{D+1}$ that is the unions of lines, which is true if and only if $\Y$ is defined by homogeneous equations, gives rise to a projective variety $[\Y]$ of $\Pr^D$ defined by the same equations.

\begin{example}
	 Recall from Section \ref{subsection:UnionsLinearVarieties} that a union $\tilde{\Phi}$ of linear subspaces is defined as the zero-set of homogeneous polynomials. Then $\tilde{\Phi}$ gives rise to a projective variety $[\tilde{\Phi}]$ of $\Pr^D$ defined by the same equations as $\tilde{\Phi}$, which can be thought of as the set of lines through the origin in $\Re^{D+1}$ that live in $\tilde{\Phi}$.
\end{example}

\noindent Returning to our embedding $\hat{\phi}_0$, to describe the precise connection between $\hat{\phi}_0(\A)$ and $[\tilde{\cS}]$ we need to resort
to the kind of topology that is most suitable for the study
of algebraic varieties \cite{Cox:2007,Hartshorne-1977}:
\begin{dfn}[Zariski Topology] \label{dfn:Zariski}
The real vector space $\Re^D$ and the projective space $\Pr^D$ can be made into topological spaces,
by defining the closed sets of their associated topology to be all the algebraic 
varieties in $\Re^D$ and $\Pr^D$ respectively.  
\end{dfn} \noindent We are finally ready to state without proof the formal algebraic geometric relation between $\hat{\phi}_0(\A)$ and $\tilde{\cS}$:

\begin{prp} \label{prp:SingleSubspaceProjectiveClosure}
In the Zariski topology, the set $\hat{\phi}_0(\A)$ is open and dense in $[\tilde{\cS}]$, and so $[\tilde{\cS}]$ is the closure\footnote{It can further be shown that $[\tilde{\cS}] = \hat{\phi}_0(\A) \cup [\cS]$: intuitively, the set that we need to add to $\hat{\phi}_0(\A)$ to get a closed set is the \emph{slope} $[\cS]$ of $\A$.} of $\hat{\phi}_0(\A)$ in $\Pr^D$. 
\end{prp}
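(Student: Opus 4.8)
The plan is to first determine set-theoretically which lines of $\tilde{\cS}$ are reached by $\hat{\phi}_0$, then read off openness directly from the coordinate $x_0$, and finally establish density by sliding points of $\A$ off to infinity along the directions in $\cS$. The starting point is the defining system \eqref{eq:PCsimilarity} of $\tilde{\cS}$, namely $\bt_k^\transpose \xt = \b_k^\transpose x - (\b_k^\transpose\bmu)x_0 = 0$ for $k=1,\dots,c$. Given a representative $\boldsymbol{\ell}=(\ell_0,\ell_1,\dots,\ell_D)$ of a point $[\boldsymbol{\ell}]\in[\tilde{\cS}]$, I would split into two cases. If $\ell_0\neq 0$, normalize $\ell_0=1$; the equations become $\b_k^\transpose(\ell_1,\dots,\ell_D)=\b_k^\transpose\bmu$, i.e. $(\ell_1,\dots,\ell_D)\in\A$, so $[\boldsymbol{\ell}]=\hat{\phi}_0\big((\ell_1,\dots,\ell_D)\big)\in\hat{\phi}_0(\A)$ by \eqref{eq:projective_phi0}; conversely every element of $\hat{\phi}_0(\A)$ admits a representative with $\ell_0=1\neq 0$. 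If $\ell_0=0$, the same equations reduce to $\b_k^\transpose(\ell_1,\dots,\ell_D)=0$, i.e. $(\ell_1,\dots,\ell_D)\in\cS$. This produces the disjoint decomposition $[\tilde{\cS}]=\hat{\phi}_0(\A)\,\sqcup\,[\cS]$, where $[\cS]$ collects the points with $\ell_0=0$ and direction in $\cS$.

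Openness is then immediate. The hyperplane $\Z(x_0)$ is the zero set of the homogeneous polynomial $x_0$, hence Zariski-closed by Definition \ref{dfn:Zariski}, and the previous step identifies $\hat{\phi}_0(\A)=[\tilde{\cS}]\setminus\Z(x_0)$ as the complement in $[\tilde{\cS}]$ of a closed set. Therefore $\hat{\phi}_0(\A)$ is open in $[\tilde{\cS}]$, and it is nonempty since $\A$ is.

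The crux is density, and the main obstacle is showing that every point of $[\cS]$ lies in the Zariski closure $\overline{\hat{\phi}_0(\A)}$. Fix $[\boldsymbol{\ell}]\in[\cS]$ with $\ell_0=0$ and $\v:=(\ell_1,\dots,\ell_D)\in\cS$, and let $p$ be an arbitrary homogeneous polynomial vanishing on $\hat{\phi}_0(\A)$; the goal is $p(0,\v)=0$. The key manoeuvre is to translate $\bmu$ along $\v\in\cS$ while staying in $\A$: for $t\neq 0$ the vector $(s,\,s\bmu+\v)$ with $s:=1/t$ equals $s\cdot(1,\,\bmu+t\v)$, and $(1,\,\bmu+t\v)$ is a representative of $\hat{\phi}_0(\bmu+t\v)\in\hat{\phi}_0(\A)$ because $\bmu+t\v\in\A$. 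Homogeneity of $p$ then yields $p(s,\,s\bmu+\v)=s^m\,p(1,\,\bmu+t\v)=0$ for all $s\neq 0$, where $m=\deg p$. Hence the univariate polynomial $g(s):=p(s,\,s\bmu+\v)$ vanishes at infinitely many values of $s$, so $g\equiv 0$, and in particular $g(0)=p(0,\v)=0$. Since $p$ was arbitrary, $[\boldsymbol{\ell}]\in\overline{\hat{\phi}_0(\A)}$, proving $[\cS]\subset\overline{\hat{\phi}_0(\A)}$.

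To conclude, I combine the three steps: the decomposition gives $[\tilde{\cS}]=\hat{\phi}_0(\A)\cup[\cS]\subset\overline{\hat{\phi}_0(\A)}$, while $[\tilde{\cS}]$ is closed and contains $\hat{\phi}_0(\A)$, so $\overline{\hat{\phi}_0(\A)}\subset[\tilde{\cS}]$. Thus $\overline{\hat{\phi}_0(\A)}=[\tilde{\cS}]$, establishing simultaneously that $\hat{\phi}_0(\A)$ is dense and that $[\tilde{\cS}]$ is its closure. (As an aside, density could alternatively be obtained in one line from the irreducibility of $[\tilde{\cS}]$, which is guaranteed by the primeness of $\I_{\tilde{\cS}}$ in Proposition \ref{prp:LinearSubspaceIdeal}, since a nonempty open subset of an irreducible variety is dense; I prefer the explicit limiting argument above to keep the exposition self-contained.)
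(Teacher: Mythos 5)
Your proof is correct, and it departs from the paper's precisely at the step that carries the weight of the proposition. The openness part is the same in both: you and the paper each identify $\hat{\phi}_0(\A)$ as the complement in $[\tilde{\cS}]$ of the closed set $\Z(x_0)$, hence open in the subspace topology. For density, however, the paper takes a conceptual shortcut: $[\tilde{\cS}]$ is isomorphic as a projective variety to $\Pr^d$, hence irreducible (projective space of any dimension is irreducible), and every nonempty open subset of an irreducible topological space is dense (Example I.1.1.3 in \cite{Hartshorne-1977}). You instead prove density by hand: using the characterization of the Zariski closure as the zero set of all homogeneous polynomials vanishing on $\hat{\phi}_0(\A)$, you show that any such $p$ also vanishes at every point $[(0,\v)]$ with $\0 \neq \v \in \cS$, since $g(s) := p(s, s\bmu + \v)$ satisfies $g(s) = s^{\deg p}\, p(1, \bmu + s^{-1}\v) = 0$ for all $s \neq 0$ (because $\bmu + s^{-1}\v \in \A$), hence $g \equiv 0$, hence $g(0) = p(0,\v) = 0$. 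Your route buys self-containedness --- no appeal to irreducibility of projective space or to \cite{Hartshorne-1977} --- and, as a byproduct, your disjoint decomposition $[\tilde{\cS}] = \hat{\phi}_0(\A) \sqcup [\cS]$ actually proves the claim the paper relegates to a footnote without proof. What the paper's route buys is brevity and generality: once irreducibility is known, density of \emph{any} nonempty open subset is automatic, with no computation, and the argument transfers verbatim to arbitrary irreducible varieties. Your closing aside (deriving irreducibility from the primeness of $\I_{\tilde{\cS}}$ via Proposition \ref{prp:LinearSubspaceIdeal}) is essentially a variant of the paper's argument, except that the paper obtains irreducibility from the isomorphism $[\tilde{\cS}] \cong \Pr^d$ rather than from primeness of the vanishing ideal.
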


The projective variety $[\tilde{\cS}]$ is called the \emph{projective closure} of $\A$: it is the smallest 
projective variety that contains $\hat{\phi}_0(\A)$. We now characterize the projective closure of a union of affine subspaces. 

\begin{prp}
Let $\Psi = \bigcup_{i=1}^n\A_i$ be a union of affine subspaces of $\Re^D$. Then the
projective closure of $\Psi$ in $\Pr^D$, i.e., the smallest projective variety that contains $\hat{\phi}_0(\Psi)$, is
\begin{align}
\bigcup_{i=1}^n[\tilde{\cS}_i] = \left[\bigcup_{i=1}^n\tilde{\cS}_i\right] = \left[\tilde{\Phi}\right],
\end{align} where $\tilde{\cS}_i$ is the linear subspace of $\Re^{D+1}$ corresponding to $\A_i$ under the embedding $\phi_0$ of \eqref{eq:phi0}.
\end{prp}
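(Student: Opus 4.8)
The plan is to reduce the statement about the union $\Psi$ to the already-established single-subspace case (Proposition \ref{prp:SingleSubspaceProjectiveClosure}) by exploiting the fact that the Zariski closure of a \emph{finite} union is the union of the Zariski closures. Recall that, by definition, the projective closure of $\Psi$ is the smallest projective variety containing $\hat{\phi}_0(\Psi)$, i.e., the closure of $\hat{\phi}_0(\Psi)$ in the Zariski topology of $\Pr^D$ (Definition \ref{dfn:Zariski}).

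First I would observe that, since $\hat{\phi}_0$ acts pointwise, the image of a union is the union of the images:
\begin{align}
\hat{\phi}_0(\Psi) = \hat{\phi}_0\left(\bigcup_{i=1}^n \A_i\right) = \bigcup_{i=1}^n \hat{\phi}_0(\A_i). \nonumber
\end{align}
Next, I would invoke the elementary topological identity that closure commutes with finite unions: in any topological space $\overline{A \cup B} = \overline{A} \cup \overline{B}$, because $\overline{A} \cup \overline{B}$ is closed (a finite union of closed sets is closed, which in the Zariski topology is precisely the statement that a finite union of varieties is a variety) and is sandwiched between $A \cup B$ and every closed set containing it. Applying this inductively to the $n$ images and then using Proposition \ref{prp:SingleSubspaceProjectiveClosure}, which identifies the closure of each $\hat{\phi}_0(\A_i)$ with $[\tilde{\cS}_i]$, gives
\begin{align}
\overline{\hat{\phi}_0(\Psi)} = \bigcup_{i=1}^n \overline{\hat{\phi}_0(\A_i)} = \bigcup_{i=1}^n [\tilde{\cS}_i]. \nonumber
\end{align}

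Finally, I would verify the purely set-theoretic equalities $\bigcup_{i=1}^n [\tilde{\cS}_i] = \left[\bigcup_{i=1}^n \tilde{\cS}_i\right] = [\tilde{\Phi}]$. These are immediate from the definition of $[\,\cdot\,]$: a class $[\boldsymbol{\alpha}]$ lies in $[\tilde{\Phi}]$ if and only if its representative $\boldsymbol{\alpha}$ lies in $\tilde{\Phi} = \bigcup_i \tilde{\cS}_i$, if and only if $\boldsymbol{\alpha} \in \tilde{\cS}_i$ for some $i$, if and only if $[\boldsymbol{\alpha}] \in [\tilde{\cS}_i]$ for some $i$. Combining the two displays then yields the claim.

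I do not anticipate a serious obstacle; essentially all the content is carried by Proposition \ref{prp:SingleSubspaceProjectiveClosure}. The one step that deserves care — and the point I would flag as the crux — is the commutation of closure with finite unions, which rests on the fact that a finite union of algebraic varieties is again an algebraic variety. This is exactly what guarantees that $\bigcup_{i=1}^n [\tilde{\cS}_i]$ is genuinely \emph{closed} in the Zariski topology, so that it equals the closure rather than being a possibly strictly smaller set; it is also precisely where the finiteness of the union is essential, since an infinite union of linear subspaces need not be Zariski closed.
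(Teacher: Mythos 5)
Your proof is correct and follows essentially the same route as the paper's: both reduce to the single-subspace case via Proposition \ref{prp:SingleSubspaceProjectiveClosure} and then use the fact that a finite union of closed sets is closed (your general lemma that closure commutes with finite unions is exactly the paper's inlined argument that $\bigcup_{i=1}^n[\tilde{\cS}_i]$ is closed, contains $\hat{\phi}_0(\Psi)$, and is contained in every closed set containing $\hat{\phi}_0(\Psi)$). Your explicit verification of the set-theoretic identity $\bigcup_{i=1}^n[\tilde{\cS}_i]=\left[\bigcup_{i=1}^n\tilde{\cS}_i\right]$ and your emphasis on where finiteness is essential are fine additions, but the substance is identical.
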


\noindent The geometric fact that $[\tilde{\Phi}] \subset \Pr^D$ is the smallest projective variety of $\Pr^D$ that contains $\hat{\phi}_0(\Psi)$, manifests itself algebraically in $\I_{\Psi}$ being uniquely defined by $\I_{\tilde{\Phi}}$ and vice versa, in a very precise fashion. To describe this relation, we need a definition.
\begin{dfn}[Homogenization - Dehomogenization]
	Let $p \in \R=\Re[x_1,\dots,x_D]$ be a polynomial of degree $n$. The homogenization of $p$ is the homogeneous polynomial  
	\begin{align}
	p^{(h)} = x_0^n p\left(\frac{x_1}{x_0},\frac{x_2}{x_0},\dots,\frac{x_D}{x_0}\right) 
	\label{eq:homogenization}
	\end{align} of $\Rt=\Re[x_0,x_1,\dots,x_D]$ of degree $n$.
	Conversely, if $P \in \Rt$ is homogeneous of degree $n$,
	its dehomogenization is 
	$P_{(d)} = P(1,x_1,\dots,x_D)$, which is a polynomial of $\R$ of degree $\le n$.
\end{dfn}

\begin{example}\label{example:DehomogenizationDegreeDrop}
	Let $P=x_0^2x_1+x_0x_2^2+x_1x_2x_3$ be a homogeneous polynomial of degree $3$. Its dehomogenization is the degree-$3$ polynomial $P_{(d)}=x_1+x_2^2+x_1x_2x_3$, and the homogenization of $P_{(d)}$ is 	
		$\left(P_{(d)}\right)^{(h)} = x_0^3\left(\frac{x_1}{x_0}+\frac{x_2^2}{x_0^2}+\frac{x_1x_2x_3}{x_0^3}\right) = P$.
\end{example} \noindent The next result from algebraic geometry is crucial for our purpose.
\begin{thm}[Chapter $8$ in \cite{Cox:2007}]\label{thm:ProjectiveClosureHomogenization}
	Let $\Y$ be an affine variety of $\Re^D$ and let $\bar{\Y}$ be its projective closure in $\Pr^D$ with respect to the embedding $\hat{\phi}_0$ of \eqref{eq:projective_phi0}. Let $\I_{\Y}, \I_{\bar{\Y}}$ be the vanishing ideals of $\Y, \bar{\Y}$ respectively. Then $\I_{\bar{\Y}}=\I_{\Y}^{(h)}$, i.e., every element of $\I_{\bar{\Y}}$ arises as a homogenization of some element of $\I_{\Y}$, and every element of $\I_{\Y}$ arises as the dehomogenization of some element of $\I_{\bar{\Y}}$.
\end{thm}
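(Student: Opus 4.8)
The plan is to build a dictionary between vanishing on $\Y \subset \Re^D$ and vanishing on $\hat{\phi}_0(\Y) \subset \Pr^D$ using two elementary identities: for $p \in \R$ of degree $n$ one has $p^{(h)}(1,x_1,\dots,x_D) = p$, and for homogeneous $P \in \Rt$ one has $(P_{(d)})^{(h)} = P/x_0^{k}$, where $x_0^{k}$ is the largest power of $x_0$ dividing $P$ (cf.\ Example \ref{example:DehomogenizationDegreeDrop}). First I would prove the easy inclusion $\I_{\Y}^{(h)} \subseteq \I_{\bar{\Y}}$. Given $p \in \I_{\Y}$, the first identity shows $p^{(h)}$ vanishes at every representative $(1,\boldsymbol{\alpha})$ with $\boldsymbol{\alpha} \in \Y$; since $p^{(h)}$ is homogeneous its zero locus $\Z(p^{(h)})$ is a genuine projective variety containing $\hat{\phi}_0(\Y)$, and being Zariski-closed it therefore contains the closure $\bar{\Y}$, so $p^{(h)} \in \I_{\bar{\Y}}$. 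As $\I_{\bar{\Y}}$ is an ideal, the ideal generated by all such $p^{(h)}$ is contained in it. Because $(p^{(h)})_{(d)} = p$, the same computation simultaneously establishes that every element of $\I_{\Y}$ is the dehomogenization of an element of $\I_{\bar{\Y}}$.

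For the reverse inclusion $\I_{\bar{\Y}} \subseteq \I_{\Y}^{(h)}$, I would take a homogeneous $P \in \I_{\bar{\Y}}$ and dehomogenize. For each $\boldsymbol{\alpha} \in \Y$ we have $P_{(d)}(\boldsymbol{\alpha}) = P(1,\boldsymbol{\alpha}) = 0$, because $[(1,\boldsymbol{\alpha})] = \hat{\phi}_0(\boldsymbol{\alpha}) \in \bar{\Y}$; hence $P_{(d)} \in \I_{\Y}$ and consequently $(P_{(d)})^{(h)} \in \I_{\Y}^{(h)}$. The second identity then gives $P = x_0^{k}\,(P_{(d)})^{(h)}$, which, being a ring multiple of a generator, lies in $\I_{\Y}^{(h)}$. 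Combined with the first paragraph this already yields the ideal equality $\I_{\bar{\Y}} = \I_{\Y}^{(h)}$.

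The one genuinely geometric point, and the step I expect to be the main obstacle, is upgrading $P = x_0^{k}(P_{(d)})^{(h)}$ to the sharper assertion that the generators of $\I_{\bar{\Y}}$ are themselves honest homogenizations, i.e.\ that one may take $k=0$. This amounts to the $x_0$-saturation of $\I_{\bar{\Y}}$: if $x_0 Q \in \I_{\bar{\Y}}$ for homogeneous $Q$, then $Q \in \I_{\bar{\Y}}$. I would prove it from the fact that $\hat{\phi}_0(\Y)$ lies entirely in the affine chart $\U_0 = \{[\boldsymbol{\beta}] \in \Pr^D : \beta_0 \neq 0\}$ and is dense in $\bar{\Y}$ by definition of the projective closure; hence $\bar{\Y} \cap \U_0$ is dense in $\bar{\Y}$ and no irreducible component of $\bar{\Y}$ is contained in the hyperplane $\Z(x_0)$. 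If $x_0 Q$ vanishes on $\bar{\Y}$, then on the dense subset $\bar{\Y} \cap \U_0$, where $x_0 \neq 0$, the factor $Q$ must vanish; since $\Z(Q)$ is closed and contains a dense subset of $\bar{\Y}$, it contains $\bar{\Y}$, giving $Q \in \I_{\bar{\Y}}$. Stripping the factor $x_0^{k}$ in this way shows the homogeneous generators of $\I_{\bar{\Y}}$ are precisely the homogenizations $(P_{(d)})^{(h)}$ of the corresponding $P_{(d)} \in \I_{\Y}$. Everything outside this saturation step is pure bookkeeping with the homogenization and dehomogenization formulas; the content lies in the density of $\hat{\phi}_0(\Y)$ in $\bar{\Y}$, which prevents $\bar{\Y}$ from acquiring spurious components inside $\Z(x_0)$ and justifies describing the generators of $\I_{\bar{\Y}}$ as genuine homogenizations of elements of $\I_{\Y}$.
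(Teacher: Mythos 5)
Your proposal is correct, but there is nothing in the paper to compare it against: Theorem~\ref{thm:ProjectiveClosureHomogenization} is quoted from Chapter~8 of \cite{Cox:2007} and the paper supplies no proof of it. What you have written is, in essence, the standard textbook argument from that reference: the inclusion $\I_{\Y}^{(h)} \subseteq \I_{\bar{\Y}}$ via the identity $p^{(h)}(1,x_1,\dots,x_D)=p$ together with the Zariski-closedness of $\Z(p^{(h)})$, the reverse inclusion via dehomogenization and the identity $P = x_0^{k}\,(P_{(d)})^{(h)}$, and the $x_0$-saturation property deduced from the density of $\hat{\phi}_0(\Y)$ in $\bar{\Y}$. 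Your third paragraph is a genuinely useful supplement rather than an obstacle: taken literally, the paper's gloss that ``every element of $\I_{\bar{\Y}}$ arises as a homogenization of some element of $\I_{\Y}$'' fails for elements divisible by $x_0$ (a homogenization of a nonzero polynomial is never divisible by $x_0$), and your saturation argument is exactly what reconciles this wording with the ideal-theoretic equality $\I_{\bar{\Y}} = \I_{\Y}^{(h)}$: every homogeneous element of $\I_{\bar{\Y}}$ is $x_0^{k}$ times an honest homogenization which itself lies in $\I_{\bar{\Y}}$.

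One step you use implicitly deserves to be made explicit. In your second paragraph you only treat \emph{homogeneous} $P \in \I_{\bar{\Y}}$, but to conclude the equality of ideals you need to know that $\I_{\bar{\Y}}$ is a homogeneous ideal, i.e., that it contains the homogeneous components of each of its elements. This is where the infiniteness of $\Re$ enters: if $P = \sum_i P_i$ (homogeneous components) vanishes at every representative of a point $[\tilde{\boldsymbol{\alpha}}] \in \bar{\Y}$, then
\begin{align}
P(\lambda \tilde{\boldsymbol{\alpha}}) = \sum_i \lambda^i P_i(\tilde{\boldsymbol{\alpha}}) = 0 \quad \text{for all } \lambda \in \Re,
\end{align}
and a univariate real polynomial with infinitely many roots is identically zero, so each $P_i$ vanishes at $\tilde{\boldsymbol{\alpha}}$ as well. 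With that one-line lemma inserted, your argument is complete, and it is worth noting that no step of it requires an algebraically closed field, consistent with the real setting of the paper; algebraic closure would only be needed for the Nullstellensatz-type variant of the theorem that starts from an arbitrary ideal rather than from the vanishing ideal $\I_{\Y}$.
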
 

\noindent We have already seen that $\tilde{\Phi}$ and $[\tilde{\Phi}]$ are given as algebraic varieties by identical equations. It is also not hard to see that the vanishing ideals of these varieties are identical as well.
\begin{lem}\label{lem:IdealEquality}
	Let $\tilde{\Phi} = \bigcup_{i=1}^n \tilde{\cS}_i$ be a union of linear subspaces of $\Re^{D+1}$, and let $[\tilde{\Phi}] = \bigcup_{i=1}^n [\tilde{\cS}_i]$ be the corresponding projective variety of $\Pr^D$. Then $\I_{\tilde{\Phi},k} = \I_{[\tilde{\Phi}],k}$, i.e., a degree-$k$ homogeneous polynomial vanishes on $\tilde{\Phi}$ if and only if it vanishes on $[\tilde{\Phi}]$.
\end{lem}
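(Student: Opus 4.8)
The plan is to establish both inclusions $\I_{\tilde{\Phi},k}\subseteq\I_{[\tilde{\Phi}],k}$ and $\I_{[\tilde{\Phi}],k}\subseteq\I_{\tilde{\Phi},k}$ by reducing each vanishing condition to the same pointwise statement, exploiting homogeneity. First I would record the structural observation that underlies the equivalence: since $\tilde{\Phi}=\bigcup_{i=1}^n\tilde{\cS}_i$ is a union of linear subspaces, it is a cone, in the sense that it contains $\0$ and is closed under scalar multiplication, and by the definition of $[\tilde{\Phi}]$ in Section \ref{subsection:ProjectiveClosure} the projective variety $[\tilde{\Phi}]$ is exactly the image of $\tilde{\Phi}\setminus\{\0\}$ under the map $\boldsymbol{\alpha}\mapsto[\boldsymbol{\alpha}]$. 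The second ingredient is that for a homogeneous polynomial $P$ of degree $k$ one has $P(\lambda\boldsymbol{\alpha})=\lambda^k P(\boldsymbol{\alpha})$, so the condition $P(\boldsymbol{\alpha})=0$ depends only on the line $[\boldsymbol{\alpha}]$; this is precisely what makes the assertion that $P$ vanishes at the projective point $[\boldsymbol{\alpha}]$ well defined and equivalent to $P(\boldsymbol{\alpha})=0$ for any choice of representative $\boldsymbol{\alpha}$.

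For the forward inclusion I would take $P\in\I_{\tilde{\Phi},k}$ and an arbitrary $[\boldsymbol{\alpha}]\in[\tilde{\Phi}]$; by the structural observation there is a representative $\boldsymbol{\alpha}\in\tilde{\Phi}$, so $P(\boldsymbol{\alpha})=0$, and by homogeneity $P$ vanishes at $[\boldsymbol{\alpha}]$, giving $P\in\I_{[\tilde{\Phi}],k}$. For the reverse inclusion I would take $P\in\I_{[\tilde{\Phi}],k}$ and an arbitrary $\boldsymbol{\alpha}\in\tilde{\Phi}$, splitting into two cases: if $\boldsymbol{\alpha}\neq\0$ then $[\boldsymbol{\alpha}]\in[\tilde{\Phi}]$ and vanishing at the projective point forces $P(\boldsymbol{\alpha})=0$; if $\boldsymbol{\alpha}=\0$ then $P(\0)=0$ automatically, since every monomial of a homogeneous polynomial of positive degree $k\ge 1$ carries a positive power of some variable. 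Hence $P$ vanishes on all of $\tilde{\Phi}$, so $P\in\I_{\tilde{\Phi},k}$.

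There is no deep obstacle here; the statement is essentially a bookkeeping identity, and the only points requiring care are the two just isolated, namely the well-definedness of vanishing at a projective point (which is exactly homogeneity) and the separate treatment of the origin $\0$, which lies in $\tilde{\Phi}$ but contributes no projective point to $[\tilde{\Phi}]$. In fact the same argument proves the cleaner statement that $\I_{\tilde{\Phi},k}=\I_{[\tilde{\Phi}],k}$ for the vanishing ideal of \emph{any} cone $\tilde{\Phi}\subseteq\Re^{D+1}$ and its projectivization, with the union-of-subspaces hypothesis used only to guarantee that $\tilde{\Phi}$ is such a cone. I would therefore phrase the write-up at that level of generality, so that the role played by each hypothesis is transparent.
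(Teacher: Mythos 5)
Your proof is correct, and it is exactly the routine verification the paper has in mind: the paper states Lemma~\ref{lem:IdealEquality} without proof, remarking only that it "is not hard to see." Your two isolated points of care---well-definedness of vanishing at a projective point via $P(\lambda\boldsymbol{\alpha})=\lambda^k P(\boldsymbol{\alpha})$, and the separate treatment of $\0 \in \tilde{\Phi}$, which has no projective counterpart but at which every homogeneous polynomial of degree $k\ge 1$ vanishes---are precisely the content being suppressed, and your observation that the argument works for any cone is accurate.
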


\noindent As a Corollary of Theorem \ref{thm:ProjectiveClosureHomogenization} and Lemma \ref{lem:IdealEquality}, we obtain the key result of this Section, which we will use in Section \ref{subsection:ProofDense}.
\begin{prp}\label{prp:AffineUnionIdealHomogenization}
	Let $\Psi = \bigcup_{i=1}^n \A_i$ be a union of affine subspaces of $\Re^D$. Let $\tilde{\Phi} = \bigcup_{i=1}^n \tilde{\cS}_i$ be the union of linear subspaces of $\Re^{D+1}$ associated to $\Psi$ under the embedding $\phi_0$ of \eqref{eq:phi0}. Then $\I_{\tilde{\Phi}}$ is the homogenization of $\I_{\Psi}$. 
\end{prp}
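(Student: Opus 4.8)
The plan is to obtain this Proposition as a corollary by composing three facts already established in this Section, so no new computation is needed; the work is purely in checking that the hypotheses of each cited result are met and that the graded pieces assemble correctly.

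First I would verify that Theorem \ref{thm:ProjectiveClosureHomogenization} is applicable with $\Y = \Psi$. That theorem requires $\Psi$ to be an affine variety of $\Re^D$, which is exactly what Proposition \ref{prp:UnionAffineZeroSet} guarantees, since it exhibits $\Psi$ as the zero set of the degree-$n$ polynomials in \eqref{eq:GeneratorsPsi}. Next I would identify the projective closure of $\Psi$: by the Proposition characterizing the projective closure of a union of affine subspaces, the smallest projective variety of $\Pr^D$ containing $\hat{\phi}_0(\Psi)$ is exactly $[\tilde{\Phi}]$, so the projective closure $\bar{\Psi}$ of $\Psi$ equals $[\tilde{\Phi}]$.

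With these two ingredients in hand, Theorem \ref{thm:ProjectiveClosureHomogenization} applied to $\Y = \Psi$ gives $\I_{\bar{\Psi}} = \I_{\Psi}^{(h)}$, and substituting $\bar{\Psi} = [\tilde{\Phi}]$ yields $\I_{[\tilde{\Phi}]} = \I_{\Psi}^{(h)}$. It then remains to pass from $\I_{[\tilde{\Phi}]}$, the vanishing ideal of the projective variety, to $\I_{\tilde{\Phi}}$, the vanishing ideal of the linear union in $\Re^{D+1}$. This is supplied by Lemma \ref{lem:IdealEquality}, which asserts $\I_{\tilde{\Phi},k} = \I_{[\tilde{\Phi}],k}$ for every degree $k$. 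Here I would flag the one point that deserves care: the lemma is phrased degree by degree, so to conclude equality of the full ideals I would invoke that both $\I_{\tilde{\Phi}}$ and $\I_{[\tilde{\Phi}]}$ are homogeneous (graded) ideals, since $\tilde{\Phi}$ and $[\tilde{\Phi}]$ are defined by homogeneous equations, and a homogeneous ideal is the direct sum of its graded components; hence equality in each degree forces $\I_{\tilde{\Phi}} = \I_{[\tilde{\Phi}]}$.

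Chaining the two equalities gives $\I_{\tilde{\Phi}} = \I_{[\tilde{\Phi}]} = \I_{\Psi}^{(h)}$, which is the claim. I do not anticipate a genuine obstacle: the statement is a bookkeeping corollary, and the only subtleties are the graded-versus-degreewise point above, together with making sure the projective closure is correctly identified with $[\tilde{\Phi}]$ rather than with some strictly larger variety. The deeper content lives in Theorem \ref{thm:ProjectiveClosureHomogenization}, imported from \cite{Cox:2007}, and in the projective-closure computation preceding it, both of which I am entitled to use.
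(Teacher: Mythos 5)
Your proposal is correct and follows exactly the paper's own route: the paper derives this Proposition as a corollary of Theorem \ref{thm:ProjectiveClosureHomogenization} (applied to $\Psi$, whose projective closure is identified as $[\tilde{\Phi}]$ by the preceding proposition) combined with Lemma \ref{lem:IdealEquality}. Your additional remark that the degreewise equality of Lemma \ref{lem:IdealEquality} upgrades to equality of the full ideals because both are homogeneous ideals is a valid and careful filling-in of a detail the paper leaves implicit.
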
 
\section{Proofs of Main Theorems} \label{section:Proof}

\subsection{Proof of Theorem \ref{thm:MainTheoremDense}} \label{subsection:ProofDense}
	
	$(\Rightarrow)$ Suppose that $\X$ is in general position in $\Psi$. We need to show that $\tilde{\X}$ is in general position in $\tilde{\Phi}$. In view of Proposition \ref{prp:LinearGeneralPositionEquivalence}, and the fact that $\I_{\tilde{\Phi},n} \subset \I_{\tilde{\X},n}$, it is sufficient to show that $\I_{\tilde{\Phi},n} \supset \I_{\tilde{\X},n}$. To that end, let $P$ be a homogeneous polynomial of degree $n$ in $\Re[x_0,x_1,\dots,x_D]$ that vanishes on the points $\tilde{\X}$, i.e., $P \in \I_{\tilde{X},n}$. Then for every point $\tilde{\boldsymbol{\alpha}}= (1,\alpha_1,\dots,\alpha_D)$ of $\tilde{\X}$, we have
	\begin{align}
	P(\tilde{\boldsymbol{\alpha}})=P(1,\alpha_1,\dots,\alpha_D)=P_{(d)}(\alpha_1,\dots,\alpha_D) = 0,
	\end{align} that is, the dehomogenization $P_{(d)}$ of $P$ vanishes on all points of $\X$, i.e., $P_{(d)} \in \I_{\X}$.
	Now there are two possibilities: either $P_{(d)}$ has degree $n$, in which case $P=\left(P_{(d)}\right)^{(h)} $, or $P_{(d)}$ has degree strictly less than $n$, say $ n-k, \, k \ge 1$, in which case $P=x_0^k\left(P_{(d)}\right)^{(h)}$.	
	If $P_{(d)}$ has total degree $n$, by the general position assumption on $\X$, $P_{(d)}$ must vanish on $\Psi$. Then by Proposition \ref{prp:AffineUnionIdealHomogenization}, $\left(P_{(d)}\right)^{(h)} \in \I_{\Phi,n}$, and so $P \in \I_{\Phi,n}$.	
	If $\deg P_{(d)} = n-k, \, k \ge 1$, suppose we can find a linear form $G=\tilde{\boldsymbol{\zeta}}^\transpose \tilde{x}$, that does not vanish on any of the $\tilde{\cS}_i, \, i \in [n]$, and it is not divisible by $x_0$. Then $G_{(d)}$ will have degree $1$ and will not vanish on any of the $\A_i, \, i \in [n]$. Also $\left(G_{(d)}\right)^k P_{(d)}$ has degree $n$ and vanishes on $\X$. Since $\X$ is in general position in $\Psi$, we will have that $\left(G_{(d)}\right)^k P_{(d)}$ vanishes on $\Psi$. Then by Proposition \ref{prp:AffineUnionIdealHomogenization}, 
	$G^k \left(P_{(d)}\right)^{(h)} \in \I_{\tilde{\Phi},n}$. Since $\I_{\tilde{\Phi}} = \bigcap_{i=1}^n \I_{\tilde{\cS}_i}$ we must have that $G^k \left(P_{(d)}\right)^{(h)} \in \I_{\tilde{\cS}_i}, \, \forall i \in [n]$. Since $\I_{\tilde{\cS}_i}$ is a prime ideal (Proposition \ref{prp:LinearSubspaceIdeal}) and $G \not\in \I_{\tilde{\cS}_i}$, it must be the case that 
	$\left(P_{(d)}\right)^{(h)} \in \I_{\tilde{\cS}_i}, \, \forall i \in [n]$, i.e., $\left(P_{(d)}\right)^{(h)} \in \I_{\tilde{\Phi}}$. But $P = x_0^k \left(P_{(d)}\right)^{(h)}$, which shows that $P \in \I_{\Phi,n}$.
	
	It remains to be shown that there exists a linear form $G$ non-divisible by $x_0$, that does not vanish on any of the $\tilde{\cS}_i$. Suppose this is not true; thus if $G = \b^\transpose x + \alpha x_0$ is a linear form non-divisible by $x_0$, i.e., $\b \neq \0$, then $G$ must vanish on some $\tilde{\cS}_i$. In particular, for any non-zero vector $\b$ of $\Re^D$, $\b^\transpose x = \b^\transpose x + 0 x_0$ must vanish on some $\tilde{\cS}_i$. Recall from Section \ref{subsection:traditional}, that if $\u_{i1},\dots,\u_{id_i}$ is a basis for $\cS_i$, the linear part of $\A_i = \cS_i + \bmu_i$, then 
	\begin{align}
	\begin{bmatrix}
	1 & 0 & \cdots & 0 \\
	\bmu_i & \u_{i1} & \cdots & \u_{id_i}  \end{bmatrix}
	\end{align} is a basis for $\tilde{\cS}_i$. Since $\b^\transpose x$ vanishes on $\tilde{\cS}_i$, it must vanish on each basis vector of $\tilde{\cS}_i$. In particular, $\b^\transpose \u_{i1} = \cdots = \b^\transpose \u_{id_i} = 0$, which implies that 
	the linear form $\b^\transpose x$, now viewed as a function on $\Re^D$, vanishes on $\cS_i$, i.e., $\b^\transpose x \in \I_{\cS_i}$. To summarize, we have shown that for every $\0 \neq \b \in \Re^D$, there exists an $i \in [n]$ such that $\b^\transpose x \in \I_{\cS_i}$. Taking $\b$ equal to the standard vector $\e_1$ of $\Re^D$, we see that the linear form $x_1$ must vanish on some $\cS_i$, and similarly for the linear forms $x_2,\dots,x_D$. This in turn means that the ideal $\mathfrak{m}:=(x_1,\dots,x_D)$ generated by the linear forms $x_1,\dots,x_D$, must lie in the union $\bigcup_{i=1}^n \I_{\cS_i}$. But it is known from Proposition 1.11(i) in \cite{AtiyahMacDonald-1994}, that if an ideal $\mathfrak{a}$ lies in the union of finitely many prime ideals, then the $\mathfrak{a}$ must lie in one of these prime ideals. Applying this result to our case, we see that, since the $\I_{\cS_i}$ are prime ideals,  $\mathfrak{m} \subset \I_{\cS_i}$ for some $i \in [n]$. But this says that for any vector in $\cS_i$ all of its coordinates must be zero, i.e., $\cS_i = 0$, which violates the assumption $d_i >0, , \forall i \in [n]$. This contradiction proves the existence of our linear form $G$.
		 
	$(\Leftarrow)$ Now suppose that $\tilde{\X}$ is in general position in $\tilde{\Phi}$. We need to show that $\X$ is in general position in $\Psi$. To that end, let $p$ be a vanishing polynomial of $\Psi$ of degree $n$, then clearly $p \in \I_{\X}$. Conversely, let $p \in \I_{\X}$ of degree $n$. Then for each point $\boldsymbol{\alpha} \in \X$ 
	\begin{align}
	0&=p(\boldsymbol{\alpha})=p(\alpha_1,\dots,\alpha_D) \nonumber \\	
	&= p^{(h)}(1,\alpha_1,\dots,\alpha_D) 
	= p^{(h)}(\tilde{\boldsymbol{\alpha}}),
	\end{align} i.e., the homogenization $p^{(h)}$ vanishes on $\tilde{\X}$. By hypothesis $\tilde{\X}$ is in general position in $\tilde{\Phi}$, hence $p^{(h)} \in \I_{\tilde{\Phi},n}$. Then by Proposition \ref{prp:AffineUnionIdealHomogenization}, the dehomogenization of $p^{(h)}$ must vanish on $\Psi$. But notice that $\left(p^{(h)}\right)_{(d)} = p$, and so $p$ vanishes on $\Psi$.	

\subsection{Proof of Theorem \ref{thm:MainTransversal}} \label{subsection:ProofTransversal}
	If $\b_{i1},\dots,\b_{ic_i}$ is an orthonormal basis of $\cS_i^\perp$, $\tilde{\b}_{i1},\dots,\tilde{\b}_{ic_i}$, with	
	\small
	\begin{align}
	\tilde{\b}_{ij_i}:= [\b_{ij_i}^\transpose -\b_{ij_i}^\transpose \B_i \a_i]^\transpose = [\b_{ij_i}^\transpose -\a_i(j_i)]^\transpose, \, j_i \in[c_i],
	\end{align} \normalsize is a basis for $\tilde{\cS}_i^\perp$. Suppose that $\tilde{\Phi}$ is not transversal. Then there is some index set $\Jf \subset [n]$, say $\Jf=\left\{1,\dots,\ell\right\}, \, \ell \le n$, such that
		\begin{align}
	&\rank(\tilde{\B}_{\Jf})< \min\left\{D+1,\sum_{i \in \Jf} c_i \right\}, \label{eq:TransversalityViolated} \, \, \, \text{where}\\ 
	&\tilde{\B}_{\Jf}:= \left[\tilde{\B}_1, \dots, \tilde{\B}_{\ell}\right], \, \, \, \tilde{\B}_{i}:= [\tilde{\b}_{i1},\dots,\tilde{\b}_{ic_i}].
	\end{align} Since $\Phi$ is transversal, either $\rank(\B_{\Jf})=D$ or $\rank(\B_{\Jf})= \sum_{i \in \Jf} c_i=:c_{\Jf}$. 
	Suppose the latter condition is true, then $c_{\Jf} \le D$. Then all columns of $\B_{\Jf}$ are linearly independent, which implies that the same will be true for the columns of $\tilde{\B}_{\Jf}$, and so 
	$\rank(\tilde{\B}_{\Jf}) = c_{\Jf}$. Since by hypothesis
	$c_{\Jf} \le D$, we have that
	\begin{align}
	\codim \bigcap_{i \in \Jf} \tilde{\cS}_i = \rank(\tilde{\B}_{\Jf}) = \min\left\{D+1, c_{\Jf} \right\},
	\end{align} which contradicts \eqref{eq:TransversalityViolated}. Consequently, it must be the case that $\rank(\B_{\Jf}) =  D< c_{\Jf}$. Since $\B_{\Jf}$ is a submatrix of $\tilde{\B}_{\Jf}$, we must have $\rank \tilde{\B}_{\Jf} \ge D$. On the other hand, because of \eqref{eq:TransversalityViolated} we must have 
	$\rank(\tilde{\B}_{\Jf}) \le D$, and so $\rank(\tilde{\B}_{\Jf}) = D$. Now, $\tilde{\B}_{\Jf}$ is a $(D+1)\times c_{\Jf}$ matrix, with its minimal dimension being $(D+1)$.
Since its rank is $D$, it must be the case that all $(D+1)\times(D+1)$ minors of $\tilde{\B}_{\Jf}$ vanish. Taking the Laplace expansion of each such minor along its last row (the only place where the variables $\a_1,\dots,\a_n$ appear), we see that it is in fact a linear polynomial in the $\a_i$, with the coefficients being $D \times D$ minors of $\B_{\Jf}$. Since $\B_{\Jf}$ has rank $D$, at least one of its $D \times D$ minors is nonzero, and so at least one of the $(D+1)\times(D+1)$ minors of $\tilde{\B}_{\Jf}$ is a nonzero polynomial. Thus the algebraic variety $\W_{\Jf}$ defined by the vanishing of all the minors of $\tilde{\B}_{\Jf}$ is a proper subset of the space $\prod_{i=1}^n \Re^{c_i}$ that parametrizes the $\left\{\a_i\right\}$. Finally, $\tilde{\Phi}$ is transversal if and only if the $\left\{\a_i\right\}$ lie in the (open and dense) complement of the proper algebraic variety $\W := \bigcup_{\Jf \subset [n]} \W_{\Jf}$ of $\prod_{i=1}^n \Re^{c_i}$.

\section{Conclusions}
We established in a rigorous fashion the correctness of ASC in the case of affine subspaces. Using the technical framework of algebraic geometry, we showed that the embedding of points lying in general position inside a union of affine subspaces preserves the general position. Moreover, the embedding of a transversal union of affine subspaces will almost surely give a transversal union of linear subspaces. Future research will 
aim at
optimal realizations of the embedding in the presence of noise, analyzing SSC for affine subspaces, and reducing the complexity of ASC.

\bibliographystyle{IEEEtran}

\bibliography{../../../../CVS_files/biblio/sparse,../../../../CVS_files/biblio/learning,../../../../CVS_files/biblio/vidal,../../../../CVS_files/biblio/math,../../../../CVS_files/biblio/geometry,../../../../CVS_files/biblio/segmentation,../../../../CVS_files/biblio/vision,../../../../CVS_files/biblio/dataset,../../../../CVS_files/biblio/matrixcompletion}

\end{document}